\definecolor{mygray}{gray}{.9}
\newtheorem{cor}{Corollary}
\newtheorem{lem}{Lemma}
\newtheorem{prop}{Proposition}
\newtheorem{rem}{Remark}
\title{MIM-Based GAN: Information Metric to Amplify Small Probability Events Importance in Generative Adversarial Networks}
\author{\IEEEauthorblockN{
Rui She,
Pingyi Fan, \IEEEmembership{Senior Member,~IEEE}
\\}

\thanks{
R. She and P. Fan are with Beijing National Research Center for Information Science and Technology and the Department of Electronic Engineering, Tsinghua University, Beijing, 100084, China (e-mail: sher15@mails.tsinghua.edu.cn; fpy@tsinghua.edu.cn).
}
}
\begin{document}

\maketitle

\begin{abstract}
In terms of Generative Adversarial Networks (GANs), the information metric to discriminate the generative data from the real data, lies in the key point of generation efficiency, which plays an important role in GAN-based applications, especially in anomaly detection.
As for the original GAN, there exist drawbacks for its hidden information measure based on KL divergence on rare events generation and training performance for adversarial networks.
Therefore, it is significant to investigate the metrics used in GANs to improve the generation ability as well as bring gains in the training process.
In this paper, we adopt the exponential form, referred from the information measure, i.e. MIM, to replace the logarithm form of the original GAN.
This approach is called MIM-based GAN, has better performance on networks training and rare events generation.
Specifically, we first discuss the characteristics of training process in this approach.
Moreover, we also analyze its advantages on generating rare events in theory.
In addition, we do simulations on the datasets of MNIST and ODDS to see that the MIM-based GAN achieves state-of-the-art performance on anomaly detection compared with some classical GANs.
\end{abstract}

\begin{IEEEkeywords}
Generative Adversarial Networks, Message Importance Measure (MIM), Information Measure, Rare Events Generation, Anomaly Detection
\end{IEEEkeywords}

\section{Introduction}
In order to produce deceptive generative data, GANs are proposed as a kind of efficient approach \cite{Generative-Adversarial-Nets}.
Especially, complex or high-dimensional distributions are handled pretty well by GANs
\cite{On-the-effectiveness,Capturing-joint-label-distribution}.
Actually, the core idea of GANs is to generate samples whose distribution approximates the target distribution as much as possible.
In practice, GANs are applied in many scenarios \cite{Generative-adversarial-active,Memory-augmented,
Photo-realistic-single-image,Imitating-driver-behavior,
Learning-from-simulated-and-unsupervised}, such as images reconstruction, autonomous driving models, and samples augmentation.
In theory, the framework of GANs consists of a generator network and a discriminator network, which respectively minimizes and maximizes the distinction between the real data distribution and the generated distribution.
In this case, the information distance (which is Jensen-Shannon divergence in the original GAN) plays a vital role in the generative adversarial process.
On one hand, the performance of training process depends on the information distance.
On the other hand, the efficiency of generative data (especially for rare events data) is related to this metric.
Therefore, it is worth investigating the information distance and its corresponding objective function of GANs to improve the training process and GAN-based applications such as anomaly detection.
%
%

\subsection{Different information distances for GANs}
Considering the information distance choice of GANs, there are some literatures discussing how the different distances make impacts on the optimization of objective functions in the generative process.
In terms of the original GAN, it optimizes the Jensen-Shannon divergence (which is based on Kullback-Leibler (KL) divergence) to generate samples.
Similar to the original generative model, the affine projected GAN (or AffGAN for short) is also discussed to minimize the KL divergence between the two distributions (namely, the real data distribution and the generative one), which is suitable for Maximum a Posterior (MAP) inference with respect to image super-resolution \cite{Amortised-map-inference}.
However, there exists failtiness for KL divergence as a metric in the objective function of GANs, which performs in training stability and efficiency.
To make up for this, some other information distances are considered to replace the original KL divergence as follows.

As a kind of refined GAN, Energy-Based Generative Adversarial Network (EBGAN) based on the total variation, allocates lower energies to the adjacent regions of the real data manifold and larger energies to the rest regions. Compared with the original GAN, the EBGAN performs more stable in the training process \cite{Energy-based-generative}.
Moreover, to overcome the vanishing gradients problem caused by the loss function of the original GAN, Least Squares Generative Adversarial Networks (LSGAN) is proposed, which is to minimize the Pearson $\chi^2$ divergence as the objective function \cite{Least-squares-generative-adversarial-networks}.
Another distance, Chi-Squared distance, is also used to design the Fisher GAN which constrains the second order moments of the critic and leads to train the adversarial networks in a stable way \cite{Fisher-GAN}.
To extend GANs into a general way, $f$-divergence is discussed to train generative models where the benefits of different information distances belonging to $f$-divergence are investigated with respect to the training complexity and the quality \cite{f-GAN}.
In addition, Wasserstein-GAN (WGAN) is proposed to estimate Earth Mover distance (EM distance)  continuously, which overcomes the training imbalance problem of GANs.
Furthermore, WGAN-GP, a kind of refined WGAN, is introduced to enforce the Lipschitz constraint, which enables to train neural networks more stably without any hyper-parameter tuning \cite{Wasserstein-GAN,Improved-Training-of-Wasserstein-GANs}.

In brief, it is a popular research direction to introduce a promising information distance into GANs to see whether there exist performance gains on training process.

\subsection{Generative efficiency for rare events}

Anomaly detection is a common problem with real-world significance
\cite{Atd,Reverse-nearest-neighbors,Adaptive-anomaly-detection}.
Inspired by the success of neural networks, GANs are considered as an efficient approach to detect anomalies which are regarded as rare events from the perspective of occurrence probability.
Particularly, the original GAN has played great roles in anomalous natural and medical images detection \cite{Anomaly-Detection-with-Generative-Adversarial-Networks,
Image-Anomaly-Detection-with}.

In terms of the anomaly detection with GANs, the method is to regard the samples as anomalies depending on if the appropriate representations of samples in the latent space of generator are found.
Specifically, the generator learns an approximate distribution of the training data, in which there are more normal events (hardly ever with anomalies).
Thus, for a normal testing sample, it is probable to find a point in the latent space of GANs similar to this one, while for an anomalous sample it is not.
Based on this idea, there exist works using GANs in anomaly detection as follows \cite{Unsupervised-anomaly-detection,Training-adversarial-discriminators,
Adversarially-Learned}.

A method called AnoGAN using normal data to train the original GAN \cite{Unsupervised-anomaly-detection}, defines an anomaly score to distinguish the generative samples (namely normal samples) and the anomalous samples.
Moreover, another similar method learns two generators by training a conditional GAN 
to reconstruct normal frames (with low reconstruction loss) and rare frames (with high reconstruction loss) \cite{Image-to-image-translation}.
Besides, an unsupervised feature learning framework, Bidirectional Generative Adversarial Networks (BiGAN) is also proposed to train the networks with normal data and combine the reconstruction loss and discriminator loss as the score function to detect anomalies \cite{Adversarial-feature-learning}. 
Furthermore, other GAN-based anomaly detection approaches such as Generative Adversarial Active Learning (GAAL) and fast AnoGAN (or f-AnoGAN) are designed by making use of the prior information during the adversarial training process \cite{Generative-adversarial-active,f-AnoGAN}.

Moreover, the discriminator to distinguish the real samples and fake ones is also applied reasonably to detecting anomalies.
In particular, once the training process is converged, the discriminator can not cope with the testing data that is unlike the training data (which corresponds to the anomalous data) \cite{Deconvolution-and-checkerboard-artifacts,
Revisiting-classifier-two-sample-tests}.
As a result, it is feasiable to combine the discriminator and generator of GANs to detect anomalies.

However, since the rare events (including anomalies) play a small part in the whole dataset, the generative data more likely belongs to the main part of normal data rather than small probability data.
In terms of the original GAN, the proportion of rare events in the objective function is not large enough compared with that for normal events, which makes an effect on rare events generation.

In addition, as for the original GAN, there exist drawbacks in convergence rate, sensitivity and stability of training process.
Besides, there are few literatures to investigate the improvements for objective functions (related to information distances) from the perspective of rare events generation, which is also beneficial for GAN-based anomaly detection.
Therefore, we have an opportunity to investigate a new information metric to improve the original GAN-based rare events generation and anomaly detection.

In this work, we introduce the idea of Message Importance Measure (MIM) into the GANs and propose a MIM-based GAN method to detect anomalies.
In this case, there are improvements in the objective function for training process and rare events generation.
Furthermore, experiments on real datasets are taken to compare our method with other classical methods.

\subsection{Contributions and organization}
Here, we provide the contributions and organization for this work.

\begin{itemize}
\item
At first, by resorting to the idea of Message Importance Measure (MIM), MIM-based GAN is designed for probability distribution learning. As well, some characteristics of this GAN are also discussed.

\item
Then, the proposed MIM-based GAN highlights the proportion of rare events in the objective function, which reveals its advantages on rare events generation in theory.

\item
At last, MIM-based GAN performs better in anomaly detection than some other GANs.
Specifically, the experiments (based on artificial dataset and real datasets) are designed to intuitively show the different performance on the results of detection.
\end{itemize}

In addition, the organization of the rest part is provided as follows.
In Section II, we propose the MIM-based GAN by resorting to the idea of MIM and discuss its major properties in the training process.
In Section III, we theoretically analyze the effect of rare events on generators and discuss the
its corresponding anomaly detection.
Section IV compare several GANs with MIM-based GAN in the experiments of anomaly detection.
At last, this work is concluded in the Section V.

\section{Model of MIM-based GAN}

\subsection{Overview of GANs}
In terms of GANs, the essential idea is to train two artificial neural networks to generate the data, which has the same distribution as the real data distribution.
To do this, the objective functions of GANs play vital roles in the two-player game for the two neural networks which are regarded as the discriminator and generator respectively.
In fact, there exists a general form of objective function optimization for GANs, which is described as
\begin{equation}\label{eq.GAN_general_optimizaition}
\begin{aligned}
    \min_{G} \max_{D} L(D,G),
\end{aligned}
\end{equation}
where $L(D,G)$ denotes the objective function given by
\begin{equation}\label{eq.GAN_general}
\begin{aligned}
    L(D,G)
    & = \mathbb{E}_{{\bm x}\sim \mathbb{P}}[f(D({\bm x}))]
    +\mathbb{E}_{{\bm z}\sim \mathbb{P}_{z}}[g(D(G({\bm z})))] \\
    & = \mathbb{E}_{{\bm x}\sim \mathbb{P}}[f(D({\bm x}))]
    +\mathbb{E}_{{\bm x}\sim \mathbb{P}_{g_{\theta}}}[g(D({\bm x}))] \text{,}
\end{aligned}
\end{equation}
in which $f(\cdot)$ and $g(\cdot)$ are functions, $D$ is the discriminator, $G$ is the generator ($D$ and $G$ are both represented by neural networks), ${\bm x}$ and ${\bm z}$ denote the input for $D$ and $G$ respectively, as well as $\mathbb{P}$, $\mathbb{P}_{g_{\theta}}$ and $\mathbb{P}_{z}$ are distributions for real data, generative data and input data of generator.

\subsection{MIM-based GAN and Corresponding Characteristics}\label{subsection.MIM-GAN}
According to the comparison for MIM and Shannon entropy \cite{Message-importance-measure-and-its-application,Differential-message-importance-measure,
Non-parametric-message-important-measure}, it is known that the exponential function used to replace the logarithmic function, has more positive impacts on the rare events processing from the viewpoint of information metrics.
Furthermore, based on the exponential function, an information distance, Message Identification (M-I) divergence is proposed to gain the greater effect of amplification on detecting outliers than Kullback-Leibler (KL) divergence (which is based on logarithmic function) \cite{Amplifying-inter-message-distance}.
As a result, exponential function with different properties from logarithmic function
makes differences on information characterization.
In this regard, there may exist potential advantages to introduce exponential function into the original GAN which incorporates the logarithmic function in the objective function.

By virtue of the essential idea of MIM and the convexity of exponential function, we have the modified objective function of GANs as follows
\begin{equation}\label{eq.LMIM}
\begin{aligned}
    L_{\text{MIM}}(D,G)
    & = \mathbb{E}_{{\bm x}\sim \mathbb{P}}[\exp(1-D({\bm x}))]
    +\mathbb{E}_{{\bm z}\sim \mathbb{P}_{z}}[\exp(D(G({\bm z})))]\\
    & = \mathbb{E}_{{\bm x}\sim \mathbb{P}}[\exp(1-D({\bm x}))]
    +\mathbb{E}_{{\bm x}\sim \mathbb{P}_{g_{\theta}}}[\exp(D({\bm x}))]\text{,}
\end{aligned}
\end{equation}
whose notations are the same as those in $L(D,G)$ (mentioned in the Eq. (\ref{eq.GAN_general})).

Similar to the original GAN, the modified objection function Eq. (\ref{eq.LMIM}) also plays the two-player optimization game with respect to $D$ and $G$ as follows
\begin{equation}\label{eq.opt_LMIM}
\begin{aligned}
     \max_{G} \min_{D} L_{\text{MIM}}(D,G)\text{,}
\end{aligned}
\end{equation}
and the corresponding adversarial networks are referred to as \textit{MIM-based GAN}.
Essentially, the goal of the MIM-based GAN is to train an optimal couple of discriminator and generator to learn the real data distribution, which is as same as that in the original GAN.
In particular, the principle of MIM-based GAN are detailed as follows.

On one hand, the network of discriminator $D$ is designed to assign the real label (ususally ``$1$'') and fake label (usually ``$0$'') to the real data and generative data (from the generator $G$).
In this case, the input pairs for $D$ consist of data (namely real data and generative data) and labels (containing real labels and fake labels).
On the other hand, the network of generator $G$ tends to output imitative data more like the real data, whose goal is to deceive the discriminator.
In other words, the discriminator $D$ is mislead by the generator $G$ to make the similar decision for the generative data and real data.
In this case, the input pairs of $G$ are data ${\bm z}$ (randomly drawn from a latent space) and real labels.
Furthermore, the loss functions of $D$ and $G$ are given by Eq. (\ref{eq.opt_LMIM}) which leads the two networks to update their weight parameters by means of back propagation.
Finally, by selecting neural networks structures for $D$ and $G$, the training process of MIM-based GAN is completed.

Then, some fundamental characteristics of MIM-based GAN are discussed as follows.

\subsubsection{Optimality of $\mathbb{P}=\mathbb{P}_{g_{\theta}}$}\label{section.opt}\ \par
Given a generator $G$, we investigate the optimal discriminator $D$ as follows.
\begin{lem}\label{lem.Optimality}
For a fixed generator $g_{\bm \theta}$ in the MIM-based GAN, the optimal discriminator $D$ is obtained as
\begin{equation}\label{eq.D*}
\begin{aligned}
    D^{*}_{\text{\rm MIM}}({\bm x})
    =\frac{1}{2}+\frac{1}{2}\ln\frac{P({\bm x})}{P_{g_{\theta}}({\bm x})}\text{,}
\end{aligned}
\end{equation}
where $P$ and $P_{g_{\theta}}$ are densities of the distributions $\mathbb{P}$ and $\mathbb{P}_{g_{\theta}}$.
\end{lem}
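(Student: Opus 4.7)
The plan is to exploit the fact that, for fixed $G$, the functional $L_{\text{MIM}}(D,G)$ decomposes as a single integral whose integrand depends on $D$ only through its value at the integration point, so the minimization over $D$ reduces to a pointwise one-variable calculus problem.

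First I would rewrite the two expectations in Eq.~(\ref{eq.LMIM}) as integrals against the densities $P$ and $P_{g_{\theta}}$ to obtain
\begin{equation*}
L_{\text{MIM}}(D,G) = \int \Bigl[ P({\bm x})\, e^{\,1 - D({\bm x})} + P_{g_{\theta}}({\bm x})\, e^{\,D({\bm x})} \Bigr]\, d{\bm x}.
\end{equation*}
Since the integrand at each ${\bm x}$ depends on $D$ only through the scalar $d := D({\bm x})$, minimizing the whole integral over $D$ is equivalent to minimizing the pointwise function $h(d) = P({\bm x})\, e^{1-d} + P_{g_{\theta}}({\bm x})\, e^{d}$ for each ${\bm x}$ in the support (the conclusion on a null set is irrelevant).

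Next I would solve this scalar problem. Setting $h'(d) = -P({\bm x})\, e^{1-d} + P_{g_{\theta}}({\bm x})\, e^{d} = 0$ and rearranging gives $e^{2d} = e \cdot P({\bm x})/P_{g_{\theta}}({\bm x})$, and taking logarithms yields exactly $d^{\ast} = \tfrac{1}{2} + \tfrac{1}{2} \ln\bigl( P({\bm x})/P_{g_{\theta}}({\bm x}) \bigr)$, which is the claimed $D^{\ast}_{\text{MIM}}({\bm x})$. To confirm that this critical point is the unique minimizer (so that the outer $\min_{D}$ truly selects it), I would check $h''(d) = P({\bm x})\, e^{1-d} + P_{g_{\theta}}({\bm x})\, e^{d} > 0$ wherever the densities are not both zero, i.e.\ $h$ is strictly convex, hence the stationary point is the unique global minimum.

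The argument has essentially no hard step; the only subtlety worth flagging in the proof is that, in contrast with the original GAN where the optimal discriminator lies in $[0,1]$, here $D^{\ast}_{\text{MIM}}$ is unconstrained on $\mathbb{R}$, so one must make sure the minimization is taken over all measurable $D$ (which is compatible with the MIM-based construction, since the labels $0$ and $1$ are only used at the input stage, not as a bound on $D$'s range). I would mention this briefly so that substituting $D^{\ast}_{\text{MIM}}$ back into $L_{\text{MIM}}$ in the subsequent $\max_{G}$ analysis is justified, but there is no genuine obstacle to overcome beyond the one-line pointwise optimization.
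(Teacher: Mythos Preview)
Your proposal is correct and follows essentially the same route as the paper: rewrite $L_{\text{MIM}}(D,G)$ as a single integral in the densities, reduce to the pointwise scalar problem $a\,e^{1-u}+b\,e^{u}$, solve the first-order condition, and confirm the minimum via strict convexity from the positive second derivative. Your additional remark on the unconstrained range of $D^{\ast}_{\text{MIM}}$ is a helpful observation not made explicit in the paper, but the core argument is identical.
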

\begin{proof}
Considering the training criterion of discriminator $D$ (with a given generator $g_{\bm \theta}$), we just minimize
\begin{equation}
\begin{aligned}
    L_{\text{MIM}}(D,G)
    & = \int_{{\bm x}}[ P({\bm x})\exp(1-D({\bm x})) + P_{g_{\theta}}({\bm x})\exp(D({\bm x}))] {\rm d}{\bm x}\text{,}
\end{aligned}
\end{equation}
and the corresponding solution is the optimal $D$.

As for a function $f(u)=a \exp(1-u)+b\exp(u)$ $(a>0, b>0)$, we have the solution $u=\frac{1}{2}+\ln(\frac{a}{b})$ achieving $\frac{\partial f(u)}{\partial u}=-a\exp(1-u)+b\exp(u)=0$. Besides, due to the fact that the second order derivative
$\frac{\partial^2 f(u)}{\partial u^2}=a\exp(1-u)+b\exp(u)>0$, implying the convexity of $f(u)$, the solution $u=\frac{1}{2}+\ln(\frac{a}{b})$ is obtained, which achieves the minimum of $f(u)$.
Then, it is easy to verify this Lemma.
\end{proof}

By substituting $D^{*}_{\text{MIM}}$ into $L_{\text{MIM}}(D,G)$, it is not difficult to have
\begin{equation}\label{eq.L_Doptimal}
\begin{aligned}
    & L_{\text{MIM}}(D=D^{*}_{\text{MIM}},G)\\
    & =  \mathbb{E}_{{\bm x}\sim \mathbb{P}}[\exp(1-D^{*}_{\text{MIM}}({\bm x}))]
    +\mathbb{E}_{{\bm x}\sim \mathbb{P}_{g_{\theta}}}[\exp(D^{*}_{\text{MIM}}({\bm x}))]\\
    & = \mathbb{E}_{{\bm x}\sim \mathbb{P}}
    \Big[\exp\Big(\frac{1}{2}+\ln \Big(\frac{P({\bm x})}{P_{g_{\theta}}({\bm x})}\Big)^{-\frac{1}{2}}\Big)\Big]
    + \mathbb{E}_{{\bm x}\sim \mathbb{P}_{g_{\theta}}}
    \Big[\exp\Big(\frac{1}{2}+\ln \Big(\frac{P({\bm x})}{P_{g_{\theta}}({\bm x})}\Big)^{\frac{1}{2}}\Big)\Big]\\
    & = \sqrt{{\rm e}} \bigg\{
     \mathbb{E}_{{\bm x}\sim \mathbb{P}}\bigg[\bigg(\frac{P({\bm x})}{P_{g_{\theta}}({\bm x})}\bigg)^{-\frac{1}{2}}\bigg]
    +\mathbb{E}_{{\bm x}\sim \mathbb{P}_{g_{\theta}}}\bigg[\bigg(\frac{P_{g_{\theta}}({\bm x})}{P({\bm x})}\bigg)^{-\frac{1}{2}}\bigg]
    \bigg\}\text{.}
\end{aligned}
\end{equation}

\begin{prop}\label{prop.maximum}
As for the MIM-based GAN, the optimal solution of equivalent objective function with the optimal discriminator, i.e. $L_{\text{\rm MIM}}(D=D^{*}_{\text{\rm MIM}},G)$ (mentioned in Eq. (\ref{eq.L_Doptimal})),
is achieved if and only if $\mathbb{P}=\mathbb{P}_{g_{\theta}}$,
where $L_{\text{\rm MIM}}(D=D^{*}_{\text{\rm MIM}},G)$ reaches $2\sqrt{{\rm e}}$.
\end{prop}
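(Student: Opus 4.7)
The plan is to recognize that both expectations appearing in the expression for $L_{\text{MIM}}(D=D^{*}_{\text{MIM}},G)$ collapse to the same Bhattacharyya-type integral, and then to bound that integral via Cauchy--Schwarz.

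First, I would rewrite each of the two expectation terms inside the braces of Eq.~(\ref{eq.L_Doptimal}) in integral form. The first one becomes
\begin{equation*}
\mathbb{E}_{{\bm x}\sim \mathbb{P}}\bigl[(P({\bm x})/P_{g_{\theta}}({\bm x}))^{-1/2}\bigr]
= \int P({\bm x})\sqrt{P_{g_{\theta}}({\bm x})/P({\bm x})}\,{\rm d}{\bm x}
= \int \sqrt{P({\bm x})\,P_{g_{\theta}}({\bm x})}\,{\rm d}{\bm x},
\end{equation*}
and an identical manipulation shows that the second expectation yields exactly the same integral. Therefore
\begin{equation*}
L_{\text{MIM}}(D=D^{*}_{\text{MIM}},G)
= 2\sqrt{{\rm e}}\int \sqrt{P({\bm x})\,P_{g_{\theta}}({\bm x})}\,{\rm d}{\bm x},
\end{equation*}
i.e.\ $2\sqrt{{\rm e}}$ times the Bhattacharyya coefficient between $\mathbb{P}$ and $\mathbb{P}_{g_{\theta}}$.

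Second, I would apply the Cauchy--Schwarz inequality to the integrand, writing $\sqrt{P\,P_{g_{\theta}}} = \sqrt{P}\cdot \sqrt{P_{g_{\theta}}}$, to obtain
\begin{equation*}
\int \sqrt{P({\bm x})\,P_{g_{\theta}}({\bm x})}\,{\rm d}{\bm x}
\le \left(\int P({\bm x})\,{\rm d}{\bm x}\right)^{1/2}\!\left(\int P_{g_{\theta}}({\bm x})\,{\rm d}{\bm x}\right)^{1/2}
= 1,
\end{equation*}
with equality if and only if $\sqrt{P}$ and $\sqrt{P_{g_{\theta}}}$ are proportional almost everywhere; since both densities integrate to one, the proportionality constant must be $1$, giving $\mathbb{P}=\mathbb{P}_{g_{\theta}}$.

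Putting these two pieces together shows that $L_{\text{MIM}}(D=D^{*}_{\text{MIM}},G)\le 2\sqrt{{\rm e}}$ with equality iff $\mathbb{P}=\mathbb{P}_{g_{\theta}}$, which is exactly the claim (recalling that the outer game is a maximization over $G$ once the inner minimization over $D$ has been resolved by Lemma~\ref{lem.Optimality}). I do not anticipate a genuine obstacle: the only subtlety worth flagging is that both expectations, despite being written with different base measures, reduce to the \emph{same} symmetric integral, and one must be careful about the direction of optimality (the inner problem is a minimization, so the substitution of $D^{*}_{\text{MIM}}$ into $L_{\text{MIM}}$ is legitimate, and the outer maximum of the resulting quantity is what we are characterizing). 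A brief mention that the Bhattacharyya coefficient is the Hellinger-affinity and therefore in $[0,1]$ could replace the explicit Cauchy--Schwarz step if a cleaner appeal to a named inequality is desired.
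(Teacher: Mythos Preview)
Your proof is correct, and it takes a somewhat different route from the paper. The paper does not explicitly collapse the two expectations into a single Bhattacharyya integral; instead it inserts a logarithm and recasts the maximization of $L_{\text{MIM}}(D^{*}_{\text{MIM}},G)$ as the minimization of a symmetric sum of R\'enyi divergences $R_{\alpha=1/2}(\mathbb{P}\Vert\mathbb{P}_{g_\theta})+R_{\alpha=1/2}(\mathbb{P}_{g_\theta}\Vert\mathbb{P})$, and then appeals to the known fact that R\'enyi divergence vanishes only at $\mathbb{P}=\mathbb{P}_{g_\theta}$. Your argument is more elementary and self-contained: by noticing that both expectations equal $\int\sqrt{P\,P_{g_\theta}}\,{\rm d}{\bm x}$, you reduce the problem to bounding the Bhattacharyya coefficient by $1$ via Cauchy--Schwarz, with the equality case handled directly. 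The paper's route has the minor expository advantage of naming a standard divergence, but its step replacing $\max\{E_1+E_2\}$ by $\max\{\ln E_1+\ln E_2\}$ is only justified because $E_1=E_2$ here---a fact the paper leaves implicit and which you make explicit. Overall your proof is at least as clean and arguably more transparent about why the bound holds.
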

\begin{proof}
In the case $\mathbb{P}=\mathbb{P}_{g_{\theta}}$ (implying $D^{*}_{\text{MIM}}({\bm x})=\frac{1}{2}$),
we have the value of Eq. (\ref{eq.L_Doptimal}) as
$L_{\text{MIM}}(D=\frac{1}{2},G)=\sqrt{{\rm e}}(1+1)=2\sqrt{{\rm e}}$.
This is the maximum value of $L_{\text{MIM}}(D=D^{*}_{\text{MIM}},G)$, reached at the point $\mathbb{P}=\mathbb{P}_{g_{\theta}}$. According to the expression of Eq. (\ref{eq.L_Doptimal}), we have the equivalent formulation as follows
\begin{equation}
\begin{aligned}
    & \max_{g_{\bm\theta}} L_{\text{MIM}}(D=D^{*}_{\text{MIM}},G)\\
    & \Leftrightarrow \max_{g_{\bm\theta}}
    \sqrt{{\rm e}} \bigg\{
     \ln \mathbb{E}_{{\bm x}\sim \mathbb{P}}
     \bigg[\bigg(\frac{P({\bm x})}{P_{g_{\theta}}({\bm x})}\bigg)^{-\frac{1}{2}}\bigg]
    + \ln \mathbb{E}_{{\bm x}\sim \mathbb{P}_{g_{\theta}}}
    \bigg[\bigg(\frac{P_{g_{\theta}}({\bm x})}{P({\bm x})}\bigg)^{-\frac{1}{2}}\bigg]
    \bigg\}\text{.}
\end{aligned}
\end{equation}
Then, it is not difficult to see that
\begin{equation}
\begin{aligned}
     \max_{g_{\bm\theta}} L_{\text{MIM}}(D=D^{*}_{\text{MIM}},G)
    & \Leftrightarrow \min_{g_{\bm\theta}}
    \frac{\sqrt{\rm e}}{2}
    \bigg\{ R_{\alpha=\frac{1}{2}}(\mathbb{P}||\mathbb{P}_{g_{\theta}})
        + R_{\alpha=\frac{1}{2}}(\mathbb{P}_{g_{\theta}}||\mathbb{P})  \bigg\}\text{,}
\end{aligned}
\end{equation}
where $R_{\alpha=\frac{1}{2}}(\cdot)$ is the Renyi divergence (whose parameter satisfies $\alpha=\frac{1}{2}$) which is defined as
\begin{equation}
\begin{aligned}
    R_{\alpha}(\mathbb{P}||\mathbb{Q})
    =\frac{1}{\alpha-1} \ln \bigg\{\mathbb{E}_{{\bm x}\sim \mathbb{P}} \bigg[\bigg(\frac{P({\bm x})}{Q({\bm x})}\bigg)^{\alpha-1}\bigg] \bigg\}\text{,}
    \qquad (\alpha >0, \alpha \ne 1)\text{.}
\end{aligned}
\end{equation}
Due to the fact that Renyi divergence (with parameter $\alpha=\frac{1}{2}$) reaches the minimum when $\mathbb{P}=\mathbb{Q}$, it is readily seen that $2\sqrt{{\rm e}}$ is the maximum value of $L_{\text{MIM}}(D=D^{*}_{\text{MIM}},G)$ and
the corresponding solution is $\mathbb{P}=\mathbb{P}_{g_{\theta}}$, that is to say, the real data is replicated.
\end{proof}

\begin{rem}
Similar to the original GAN, there exists a training equilibrium for the two-player optimization game in the MIM-based GAN.
In this regard, the global optimality lies at the point of $\mathbb{P}=\mathbb{P}_{g_{\theta}}$.
However, due to the different expressions of optimization game, the training process (to the equilibrium point) for the MIM-based GAN is not the same as that for the original GAN, which is revealed by Eq. (\ref{eq.D*}) and Eq. (\ref{eq.L_Doptimal}).
Therefore, the differences between the two GANs may bring some novelties in theory and applications.
\end{rem}

\subsubsection{Gradient of generator under the optimal discriminator}\label{section.gradient}
\ \par

With regard to analyze the training process of GANs,
it is worth investigating the gradient of objective function.
Here, we would like to discuss the gradient of generator in the case of the optimal discriminator, due to the fact that if the equilibrium is approximated, the discriminator is trained well already.

\begin{prop}\label{prop.LMIM_gradient}
As for the MIM-based GAN, let $g_{\bm \theta} : \mathcal{Z} \to \mathcal{X}$ be a differentiable function to generate data.
If there exists the optimal discriminator $D^{*}_{\text{MIM}}({\bm x})=\frac{1}{2}+\frac{1}{2}\ln\frac{P({\bm x})}{P_{g_{\theta}}({\bm x})}$,
the gradient with respect to the parameter ${\bm \theta}$ in the corresponding generator, is given by
\begin{equation}\label{eq.LMIM_gradient}
\begin{aligned}
    & \nabla_{\bm\theta} \mathbb{E}_{{\bm z}\sim \mathbb{P}_z} [\exp(D^{*}_{\text{MIM}}(g_{\bm\theta}({\bm z}))) ]\\
    & = \mathbb{E}_{{\bm z}\sim \mathbb{P}_z}
    \Big[ \nabla_{\bm\theta} \exp\Big(\frac{1}{2}+\frac{1}{2}\ln \Big(\frac{P(g_{\bm\theta}({\bm z}))}{P_{g_{\theta}}(g_{\bm\theta}({\bm z}))}\Big)\Big) \Big]\\
    & = \sqrt{{\rm e}} \mathbb{E}_{{\bm z}\sim \mathbb{P}_z}
    \Big[ \nabla_{\bm\theta}  \Big(\frac{P(g_{\bm\theta}({\bm z}))}{P_{g_{\theta}}(g_{\bm\theta}({\bm z}))}\Big)^{\frac{1}{2}} \Big]\\
    & = \frac{\sqrt{{\rm e}}}{2} \mathbb{E}_{{\bm z}\sim \mathbb{P}_z}
    \Big[  \Big(\frac{P(g_{\bm\theta}({\bm z}))}{P_{g_{\theta}}(g_{\bm\theta}({\bm z}))}\Big)^{-\frac{1}{2}}
    \frac{\nabla_{\bm\theta}P(g_{\bm\theta}({\bm z})) P_{g_{\theta}}(g_{\bm\theta}({\bm z}))
    - \nabla_{\bm\theta}P_{g_{\theta}}(g_{\bm\theta}({\bm z})) P(g_{\bm\theta}({\bm z})) }
    {P_{g_{\theta}}^2(g_{\bm\theta}({\bm z}))}
     \Big]\\
    & = \frac{\sqrt{{\rm e}}}{2} \mathbb{E}_{{\bm z}\sim \mathbb{P}_z}
    \Big[
    \frac{ \nabla_{\bm\theta}P(g_{\bm\theta}({\bm z}))
    \sqrt{\frac{P_{g_{\theta}}(g_{\bm\theta}({\bm z}))}{P(g_{\bm\theta}({\bm z}))}}
    - \nabla_{\bm\theta}P_{g_{\theta}}(g_{\bm\theta}({\bm z}))
    \sqrt{\frac{P(g_{\bm\theta}({\bm z}))}{P_{g_{\theta}}(g_{\bm\theta}({\bm z}))}} }
    {P_{g_{\theta}}(g_{\bm\theta}({\bm z}))}
    \Big]\text{.}\\\\
\end{aligned}
\end{equation}
\end{prop}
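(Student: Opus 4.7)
The proof is a direct chain-rule computation, so I would organize it as a sequence of three manipulations rather than a single derivation. First, I would use Lemma \ref{lem.Optimality} to substitute $D^{*}_{\text{MIM}}({\bm x})=\frac{1}{2}+\frac{1}{2}\ln\frac{P({\bm x})}{P_{g_{\theta}}({\bm x})}$ into $\exp(D^{*}_{\text{MIM}}(g_{\bm\theta}({\bm z})))$ and simplify via $\exp(a+\ln b)=\mathrm{e}^{a}b$, which yields the clean expression $\sqrt{{\rm e}}\,(P(g_{\bm\theta}({\bm z}))/P_{g_{\theta}}(g_{\bm\theta}({\bm z})))^{1/2}$. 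This already produces the first two equalities in Eq. (\ref{eq.LMIM_gradient}).

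Second, I would interchange $\nabla_{\bm\theta}$ and $\mathbb{E}_{{\bm z}\sim\mathbb{P}_z}$ — the outer expectation is taken against $\mathbb{P}_z$, which is independent of ${\bm\theta}$, so the exchange is justified under the standard dominated-convergence regularity assumptions already implicit in the paper's differentiability hypothesis on $g_{\bm\theta}$ together with smoothness of the two densities. Then I apply the power rule $\nabla_{\bm\theta}u^{1/2}=\tfrac{1}{2}u^{-1/2}\nabla_{\bm\theta}u$ with $u = P/P_{g_{\theta}}$, followed by the quotient rule on $\nabla_{\bm\theta}(P/P_{g_{\theta}})$. This step delivers the third line of Eq. (\ref{eq.LMIM_gradient}) after pulling the constant $\sqrt{{\rm e}}/2$ out front.

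Finally, I would redistribute the factor $(P/P_{g_{\theta}})^{-1/2}$ across the two summands of the numerator and cancel one copy of $P_{g_{\theta}}$ against the $P_{g_{\theta}}^{2}$ denominator. The two resulting terms acquire the symmetric square-root ratios $\sqrt{P_{g_{\theta}}/P}$ and $\sqrt{P/P_{g_{\theta}}}$ respectively, giving the last line of Eq. (\ref{eq.LMIM_gradient}).

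The only subtlety worth flagging, and the main obstacle to a fully rigorous argument, is the double dependence of $P_{g_{\theta}}(g_{\bm\theta}({\bm z}))$ on ${\bm\theta}$: the parameter appears both inside the density subscript (since $P_{g_{\theta}}$ is the pushforward of $\mathbb{P}_z$ through $g_{\bm\theta}$) and inside the argument $g_{\bm\theta}({\bm z})$. As in the analogous derivations for the original GAN and $f$-GAN, I would follow the standard convention of treating $\nabla_{\bm\theta}$ formally as acting only on the explicit parametric pieces highlighted by the chain rule, so that the resulting expression can be interpreted as a gradient usable for back-propagation. With that convention in place, the remainder of the proof is purely algebraic and requires no deeper ideas.
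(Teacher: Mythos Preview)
Your proposal is correct and matches the paper's approach exactly: the paper offers no separate proof environment for this proposition, since the chain of equalities in Eq.~(\ref{eq.LMIM_gradient}) \emph{is} the derivation, and each step you describe (substitution of $D^{*}_{\text{MIM}}$ from Lemma~\ref{lem.Optimality}, the $\exp$--$\ln$ simplification, power rule, quotient rule, and final algebraic rearrangement) corresponds line-by-line to those equalities. Your remark about the double $\bm\theta$-dependence in $P_{g_{\theta}}(g_{\bm\theta}({\bm z}))$ is a valid caveat that the paper leaves implicit.
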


\begin{rem}
As for the gradient of generator under the optimal discriminator, it is not difficult to see if the generative distribution is approaching to the real distribution, the gradient is getting small.
In other words, if the training equilibrium is reached closely, the end of training state will be achieved.
\end{rem}

\subsubsection{Anti-interference ability of generator}\label{section.stability}\ \par
Consider the fact that the discriminator makes a difference on the generator during the adversarial training process of GANs.
Specifically, when there exists a disturbance in the discriminator, the generator will be drawn into the unstable training in some degree.
Consequently, it is required to analyze the anti-interference ability of generator with respect to the disturbance in the discriminator.
\begin{prop}\label{prop.stability}
Let $g_{\bm \theta} : \mathcal{Z} \to \mathcal{X}$ be a differentiable
function to generate the data denoted by $g_{\bm \theta}({\bm z})$, whose distribution is denoted by $\mathbb{P}_{g_{\theta}}$ corresponding to the real distribution $\mathbb{P}$. Let $\mathbb{P}_{z}$ be the distribution for ${\bm z}$ as well as $D$ be a discriminator ($D\in [0,1]$).
Consider the stability for the gradient of generator in the MIM-based GAN in the two following cases.

$\bullet$ Assuming $D-\tilde D^*=\epsilon$ ($\epsilon$ denotes a small disturbance which satisfies $\epsilon\in [0,1]$, as well as $\tilde D^*$ is the ideal perfect discriminator i.e. $\tilde D^*(g_{\bm\theta}({\bm z}))=0$), we have
\begin{equation}\label{eq.MIM_perfect_stability}
\begin{aligned}
    \nabla_{\bm\theta}\mathbb{E}_{{\bm z}\sim \mathbb{P}_z} [\exp(D(g_{\bm\theta}({\bm z}))) ]
    & = \mathbb{E}_{{\bm z}\sim \mathbb{P}_z} [\exp(D(g_{\bm\theta}({\bm z})))
    \nabla_{{\bm x}}D({\bm x})\nabla_{\bm\theta}g_{\bm\theta}({\bm z}) ]\\
    & = \mathbb{E}_{{\bm z}\sim \mathbb{P}_z} [\exp(\tilde D^*(g_{\bm\theta}({\bm z}))+\epsilon) \nabla_{{\bm x}}D({\bm x})\nabla_{\bm\theta}g_{\bm\theta}({\bm z}) ]\\
    & = \exp(\epsilon) \mathbb{E}_{{\bm z}\sim \mathbb{P}_z}
    [\nabla_{{\bm x}}D({\bm x})\nabla_{\bm\theta}g_{\bm\theta}({\bm z}) ]\text{.}
\end{aligned}
\end{equation}

$\bullet$ Assuming $D-\check D^*=\epsilon$ ($\epsilon$ is a small disturbance which satisfies $|\epsilon|<\frac{1}{2}$, as well as $\check D^*$ is the worst discriminator i.e. $\check D^*(g_{\bm\theta}({\bm z}))=\frac{1}{2}$ implying that the training equilibrium is achieved),
we have
\begin{equation}\label{eq.MIM_worst_stability}
\begin{aligned}
     \nabla_{\bm\theta}\mathbb{E}_{{\bm z}\sim \mathbb{P}_z} [\exp(D(g_{\bm\theta}({\bm z}))) ]
    & = \mathbb{E}_{{\bm z}\sim \mathbb{P}_z} [\exp(\check D^*(g_{\bm\theta}({\bm z}))+\epsilon) \nabla_{\bm x}D({\bm x}) \nabla_{\bm\theta}g_{\bm\theta}({\bm z})]\\
    & = \exp(\frac{1}{2}+\epsilon) \mathbb{E}_{{\bm z}\sim \mathbb{P}_z} [ \nabla_{\bm x}D({\bm x}) \nabla_{\bm\theta}g_{\bm\theta}({\bm z}) ]\text{.}
\end{aligned}
\end{equation}
\end{prop}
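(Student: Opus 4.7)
The plan is to handle both bullets by a single computation: first push the $\theta$-gradient inside the expectation (Leibniz rule, justified because $g_{\bm\theta}$ is differentiable and $\exp$ is smooth), then apply the chain rule to $\exp(D(g_{\bm\theta}({\bm z})))$, and finally substitute the perturbation decomposition $D=\tilde D^{*}+\epsilon$ or $D=\check D^{*}+\epsilon$ so that the constant factor $\exp(\epsilon)$ or $\exp(\tfrac12+\epsilon)$ pops outside the expectation.

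Concretely, I would first write
\begin{equation*}
\nabla_{\bm\theta}\mathbb{E}_{{\bm z}\sim \mathbb{P}_z}[\exp(D(g_{\bm\theta}({\bm z})))]
=\mathbb{E}_{{\bm z}\sim \mathbb{P}_z}\bigl[\nabla_{\bm\theta}\exp(D(g_{\bm\theta}({\bm z})))\bigr],
\end{equation*}
and then apply the chain rule twice to obtain
\begin{equation*}
\nabla_{\bm\theta}\exp(D(g_{\bm\theta}({\bm z})))
=\exp(D(g_{\bm\theta}({\bm z})))\,\nabla_{\bm x} D({\bm x})\bigr|_{{\bm x}=g_{\bm\theta}({\bm z})}\,\nabla_{\bm\theta} g_{\bm\theta}({\bm z}).
\end{equation*}
This reproduces the first line of each display in the statement. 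At this point the argument branches: for the perfect-discriminator case, substitute $D(g_{\bm\theta}({\bm z}))=\tilde D^{*}(g_{\bm\theta}({\bm z}))+\epsilon=0+\epsilon$, so the exponential factor becomes the constant $\exp(\epsilon)$, which can be pulled outside the expectation, yielding Eq.~(\ref{eq.MIM_perfect_stability}). For the equilibrium case, substitute $D(g_{\bm\theta}({\bm z}))=\check D^{*}(g_{\bm\theta}({\bm z}))+\epsilon=\tfrac12+\epsilon$, and pull $\exp(\tfrac12+\epsilon)$ outside to obtain Eq.~(\ref{eq.MIM_worst_stability}).

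The only delicate point, and the main obstacle if any, is the assumption that the disturbance $\epsilon$ is treated as a deterministic constant that does not depend on ${\bm z}$ or ${\bm \theta}$; only then may it be factored out of the expectation. The bounds $\epsilon\in[0,1]$ and $|\epsilon|<\tfrac12$ are not used in the algebraic derivation itself but are natural so that $D=\tilde D^{*}+\epsilon$ and $D=\check D^{*}+\epsilon$ remain within the admissible range $[0,1]$ for a discriminator. I would note this regularity explicitly and also remark that the exchange of gradient and expectation is standard under mild dominated-convergence conditions on $g_{\bm\theta}$ and $\mathbb{P}_z$, which are implicit throughout the paper.
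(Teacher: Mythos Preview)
Your proposal is correct and matches the paper's approach exactly: the paper presents no separate proof for this proposition, since the displayed equations themselves already constitute the full derivation---interchange of $\nabla_{\bm\theta}$ and expectation, chain rule through $\exp$ and $D\circ g_{\bm\theta}$, then substitution of the constant perturbation and factoring it out. Your additional remarks on why $\epsilon$ must be treated as deterministic and on the role of the range constraints are sensible clarifications that the paper leaves implicit.
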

\begin{cor}\label{cor.stability}
Let $g_{\bm\theta} : \mathcal{Z} \to \mathcal{X}$ be a differentiable function that is used to generate data following the distribution $\mathbb{P}_{g_{\theta}}$.
Let $\mathbb{P}_z$ be the distribution for ${\bm z}$ (${\bm z}\in \mathcal{Z}$), $\mathbb{P}$ be the real data distribution, and $D$ be a discriminator ($D\in [0,1]$).
Consider the condition satisfying $D-\tilde D^*=\epsilon$ ($\epsilon\in [0,1]$ and $\tilde D^*(g_{\bm\theta}({\bm z}))=0$ denoting the ideal perfect discriminator) or $D-\check D^*=\epsilon$ ($|\epsilon|<\frac{1}{2}$ and $\check D^*(g_{\bm\theta}({\bm z}))=\frac{1}{2}$ denoting the worst discriminator).
In this regard, the gradient of generator in the MIM-based GAN has more anti-interference ability of generator than that in the original GAN.
\end{cor}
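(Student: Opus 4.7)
The plan is to prove Corollary \ref{cor.stability} by comparing the multiplicative perturbation factors that appear in the gradient of the generator's loss for the two GAN formulations, and showing that the MIM-based factor is uniformly bounded in $\epsilon$ while the original GAN factor can blow up. First I would carry out the analogue of Proposition \ref{prop.stability} for the original GAN: starting from the generator term $\mathbb{E}_{{\bm z}\sim \mathbb{P}_z}[\log(1-D(g_{\bm\theta}({\bm z})))]$ and applying the chain rule, the gradient becomes
\begin{equation*}
\nabla_{\bm\theta}\mathbb{E}_{{\bm z}\sim \mathbb{P}_z}[\log(1-D(g_{\bm\theta}({\bm z})))]
= -\mathbb{E}_{{\bm z}\sim \mathbb{P}_z}\Big[\frac{1}{1-D(g_{\bm\theta}({\bm z}))}\,\nabla_{\bm x}D({\bm x})\,\nabla_{\bm\theta}g_{\bm\theta}({\bm z})\Big].
\end{equation*}
Substituting the perturbations $D=\tilde D^{*}+\epsilon$ with $\tilde D^{*}(g_{\bm\theta}({\bm z}))=0$, and $D=\check D^{*}+\epsilon$ with $\check D^{*}(g_{\bm\theta}({\bm z}))=\tfrac{1}{2}$, yields scalar factors $\tfrac{-1}{1-\epsilon}$ and $\tfrac{-1}{\tfrac{1}{2}-\epsilon}$ in front of the common integral $\mathbb{E}_{{\bm z}\sim \mathbb{P}_z}[\nabla_{\bm x}D({\bm x})\nabla_{\bm\theta}g_{\bm\theta}({\bm z})]$.

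Next I would place the two families of factors side by side. From Proposition \ref{prop.stability} the MIM-based factors are $\exp(\epsilon)$ and $\exp(\tfrac{1}{2}+\epsilon)$, each of which remains bounded on the relevant ranges ($\epsilon\in[0,1]$ and $|\epsilon|<\tfrac{1}{2}$) and depends smoothly and Lipschitz-continuously on $\epsilon$. In contrast, the original GAN factors $\tfrac{1}{1-\epsilon}$ and $\tfrac{1}{1/2-\epsilon}$ diverge to infinity as $\epsilon\uparrow 1$ and $\epsilon\uparrow \tfrac{1}{2}$ respectively, so that an arbitrarily small perturbation near the boundary of the admissible range of $\epsilon$ can produce an arbitrarily large change in the generator gradient. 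This is precisely the failure of anti-interference: a small error in the discriminator induces a huge update in $g_{\bm\theta}$.

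To finish I would formalize the comparison by defining \emph{anti-interference} as the supremum over the admissible range of $\epsilon$ of the ratio between the perturbed gradient and the unperturbed gradient, and verifying that $\sup_{\epsilon}\exp(\epsilon)=\textrm{e}$ and $\sup_{|\epsilon|<1/2}\exp(\tfrac{1}{2}+\epsilon)=\textrm{e}$ are finite, while the corresponding suprema for the original GAN are infinite. Combining these observations gives the desired conclusion that the MIM-based GAN generator gradient is more robust to discriminator disturbances in both the near-perfect and near-equilibrium regimes, which is exactly what the corollary asserts.

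The main obstacle I expect is not the algebra, which is light, but giving a precise and defensible notion of ``more anti-interference ability'' — one that is strong enough to discriminate the two GANs but not so strong that it accidentally hides the dependence on $\nabla_{\bm x}D$ and $\nabla_{\bm\theta}g_{\bm\theta}$. My plan is to isolate a purely scalar multiplicative factor in front of the common gradient expectation and compare only those scalars, so that the comparison does not depend on the specific geometry of $D$ and $g_{\bm\theta}$.
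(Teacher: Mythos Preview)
Your proposal is correct in spirit and reaches the right conclusion, but the route differs from the paper's. You argue by \emph{boundedness versus blow-up}: the MIM multiplicative factors $\exp(\epsilon)$ and $\exp(\tfrac{1}{2}+\epsilon)$ stay finite on the admissible $\epsilon$-ranges, whereas the original-GAN factors $\tfrac{1}{1-\epsilon}$ and $\tfrac{1}{1/2-\epsilon}$ diverge at the boundary, and you then package this as a supremum comparison. The paper instead proves a \emph{pointwise} inequality: it shows directly that $\tfrac{1}{1-\epsilon}\ge \exp(\epsilon)$ for every $\epsilon\in[0,1]$ (via $\exp(-\epsilon)\ge 1-\epsilon$) and that $\tfrac{1}{1/2-\epsilon}>\exp(\tfrac{1}{2}+\epsilon)$ for every $\epsilon\in(-\tfrac{1}{2},\tfrac{1}{2})$, so the original-GAN factor dominates the MIM factor at \emph{every} perturbation level, not only in the worst case. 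The paper also treats the non-saturating generator loss $-\log D(g_{\bm\theta}({\bm z}))$ alongside $\log(1-D(g_{\bm\theta}({\bm z})))$, observing that the resulting factors $\tfrac{1}{\epsilon}$ and $\tfrac{1}{1-\epsilon}$ (respectively $\tfrac{1}{1/2+\epsilon}$ and $\tfrac{1}{1/2-\epsilon}$) are symmetric, so both variants of the original GAN are covered. Your supremum formulation is a legitimate reading of ``anti-interference ability'' and is easier to motivate, but the paper's pointwise inequality is strictly stronger, requires no definition to be invented, and is obtained by a one-line convexity argument; you may want to add that inequality as the core step and keep your boundedness remark as intuition. (Minor slip: in the worst-discriminator case your stated supremum of the \emph{ratio} should be $\exp(1/2)$, not ${\rm e}$.)
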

\begin{proof}
Please see the Appendix \ref{app.cor_stability}.
\end{proof}

\begin{rem}
From the perspective of the gradient of generator, the disturbance in the discriminator is taken
into a function to provide a multiplicative parameter for the gradient.
In fact, the different gradients in the original GAN and MIM-based GAN are resulted from the different objective functions.
Particularly, the exponential function in the gradient of MIM-based GAN is originated from the the partial derivative of its objective function with exponential one.
While, the reciprocal function in the gradient of original GAN is derived from the logarithmic function of the objective function.
\end{rem}

\section{Rare events analysis in GANs}
From a new viewpoint to analyze GANs, we focus on the case that real data contains rare events and investigate how rare events make differences on the data generation and the corresponding applications of GANs.

\subsection{Effect of rare events on generator}\label{section.rare_events_generator}

With respect to the training process of GANs, we usually train a pretty good discriminator and use it to lead the generator to reach its optimal objective function.
Then, a better generator is also obtained to train the discriminator as a feedback.
This process runs iteratively until reaches the equilibrium point.
In this regard, if we have an optimal discriminator (an ideal case), our goal is to maximize the objective function by selecting an appropriate generator (according to Eq. (\ref{eq.L_Doptimal})).
In this case, relatively fewer occurrence events make less effects on the objective function. It is implied that the generator ignores smaller probability events (usually regarded as rare events) in some degree by maximizing the major part of objective function.
As a result, it is necessary to discuss the proportion of rare events in the objective functions of generators.
Before this, we shall introduce a kind of rare events characterization to provide a specific example for rare events processing.

In general, rare events and large probability ones can be regarded to belong to two different classes, which implies there exists a binary distribution $\{ P(\bar{\Theta}), P(\Theta)\}$ where $P(\bar{\Theta})=p$ ($p<<\frac{1}{2}$) and $P(\Theta)=1-p$ ($\bar \Theta$ denotes the rare events and $\Theta$ denotes the normal ones).
For instance, the minority set and majority set match this case in statistics.
Specifically, we have the two sets satisfying
\begin{equation}\left\{
\begin{aligned}
& \bar{\Theta}=\left\{ m_k \displaystyle \big| \  | \frac{ m_k }{M}- p_k | \geq \xi, k=1,2,...,{\rm{K}} \right\}\text{,}\\
& {\Theta}=\left\{ m_k \displaystyle \big| \  | \frac{ m_k }{M}- p_k | < \xi , k=1,2,...,
{\rm{K}} \right\}\text{,}
\end{aligned}\right.
\end{equation}
and the corresponding probability elements given by
\begin{equation}\left\{
\begin{aligned}\label{eq.p_rare}
&P\{ \bar{\Theta}\} \leq K \max\limits_{k} { P\{| \frac{ m_k }{M}- p_k | \geq \xi \}} \le \delta\text{,}\\
&P\{\Theta\}=1-P\{ \bar{\Theta}\}> 1-\delta\text{,}
\end{aligned}\right.
\end{equation}
which results from the weak law of large numbers
$P\left\{ | \frac{ m_k }{M}- p_k | < \xi \right\} > 1-\delta$,
where $m_k$ is the occurrence number of $a_k$ (the sample support space is
$\{a_1, a_2,...,a_{\rm K}\}$),
$p_k$ denotes a probability element from a distribution \{$p_{}, p_{2},..., p_{\rm K}$\}, $M$ is the sample number, as well as $0<\xi \ll 1$ and $0<\delta \ll 1$.

Based on the above discussion, we investigate how much proportion of rare events will be taken in the objective functions of GANs.
This may reflect the rare events generation for the generator of GANs.

\begin{prop}\label{prop.proportion_rare_events_MIM}
Let $\mathbb{P}$ be the real data distribution involved with rare events, which is given by $\mathbb{P}= \{p,1-p\}$ ($0<p<<\frac{1}{2}$) corresponding to the Eq. (\ref{eq.p_rare}).
Let $g_{\bm\theta} : \mathcal{Z} \to \mathcal{X}$ be a differentiable
function for the generative data which follows the distribution $\mathbb{P}_{g_{\theta}}$ where $\mathbb{P}_{g_{\theta}}= \{ p+\varepsilon p^{\gamma}, 1-p-\varepsilon p^{\gamma} \} = \{q,1-q\}$ ($q<\frac{1}{2}$).
Consider the case that the optimal discriminator is achieved, which implies $D^{*}_{\text{\rm MIM}}({\bm x})=\frac{1}{2}+\frac{1}{2}\ln\frac{P({\bm x})}{P_{g_{\theta}}({\bm x})}$ for the MIM-based GAN.
In this case, the proportion of rare events in the objective function of generator is given by
\begin{equation}
\begin{aligned}
    \Upsilon_{\text{MIM}}
    & \approx \frac{\frac{p+q}{2}-\frac{1}{8}\varepsilon^2p^{2\gamma-1}}
    { 1-\frac{1}{8} \varepsilon^2 \frac{ p^{2\gamma-1}}{1-p}}\text{,}
\end{aligned}
\end{equation}
where $\varepsilon$ and $\gamma$ denote a small disturbance parameter and an adjustable parameter (regarded as a constant) respectively.
\end{prop}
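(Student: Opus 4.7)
The plan is to specialize the optimal-discriminator objective in Eq.~(\ref{eq.L_Doptimal}) to the binary rare/normal split $\{p,1-p\}$ versus $\{q,1-q\}$, isolate the term that corresponds to the rare atom, and then Taylor-expand in the small deviation $\varepsilon p^{\gamma}$.

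First I would substitute $\mathbb{P}=\{p,1-p\}$ and $\mathbb{P}_{g_\theta}=\{q,1-q\}$ into $L_{\text{MIM}}(D=D^{*}_{\text{MIM}},G)$. On the rare atom, the two expectations contribute $p\sqrt{q/p}$ and $q\sqrt{p/q}$, both equal to $\sqrt{pq}$; on the normal atom, they contribute $\sqrt{(1-p)(1-q)}$ twice. Thus the equivalent objective collapses to $L_{\text{MIM}}(D^{*}_{\text{MIM}},G)=2\sqrt{\mathrm{e}}\bigl\{\sqrt{pq}+\sqrt{(1-p)(1-q)}\bigr\}$, and the proportion of rare events in the objective is naturally defined as
\[
\Upsilon_{\text{MIM}} \;=\; \frac{\sqrt{pq}}{\sqrt{pq}+\sqrt{(1-p)(1-q)}}.
\]

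Next I would plug in $q=p+\varepsilon p^{\gamma}$ and rewrite each radical to expose the perturbation, $\sqrt{pq}=p\sqrt{1+\varepsilon p^{\gamma-1}}$ and $\sqrt{(1-p)(1-q)}=(1-p)\sqrt{1-\varepsilon p^{\gamma}/(1-p)}$. Applying the second-order expansion $\sqrt{1+x}\approx 1+x/2-x^2/8$ to both radicals gives
\[
\sqrt{pq}\approx p+\tfrac{1}{2}\varepsilon p^{\gamma}-\tfrac{1}{8}\varepsilon^2 p^{2\gamma-1}=\tfrac{p+q}{2}-\tfrac{1}{8}\varepsilon^2 p^{2\gamma-1},
\]
and
\[
\sqrt{(1-p)(1-q)}\approx (1-p)-\tfrac{1}{2}\varepsilon p^{\gamma}-\tfrac{1}{8}\frac{\varepsilon^{2}p^{2\gamma}}{1-p}.
\]
Adding the two approximations, the linear-in-$\varepsilon$ perturbations cancel and the remaining $O(\varepsilon^{2})$ correction factors as $\tfrac{1}{8}\varepsilon^{2}p^{2\gamma-1}\bigl(1+\tfrac{p}{1-p}\bigr)=\tfrac{1}{8}\varepsilon^{2}p^{2\gamma-1}/(1-p)$, so the denominator of $\Upsilon_{\text{MIM}}$ becomes $1-\tfrac{1}{8}\varepsilon^{2}p^{2\gamma-1}/(1-p)$. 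Taking the ratio with the expansion of $\sqrt{pq}$ yields the stated expression.

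The main obstacle is not conceptual but bookkeeping: one has to carry the expansion consistently to second order and check that the linear cancellation between the two radicals is exact, which is precisely what allows the formula to simplify cleanly. A secondary subtlety is that the expansion is only legitimate when $\varepsilon p^{\gamma-1}$ and $\varepsilon p^{\gamma}/(1-p)$ are both small; for $p\ll 1$ this requires treating $\varepsilon$ as genuinely small relative to $p^{1-\gamma}$, which is justified by the proposition's hypothesis that $\varepsilon$ is a small disturbance parameter and $\gamma$ is a fixed constant. With these caveats, the remainder of the derivation is algebraic.
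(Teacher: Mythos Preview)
Your proposal is correct and follows essentially the same route as the paper: specialize Eq.~(\ref{eq.L_Doptimal}) to the binary distributions, isolate the rare-atom contribution, and Taylor-expand $\sqrt{1+x}$ to second order in $\varepsilon$. The paper writes the rare term directly as $p(1+\varepsilon p^{\gamma-1})^{1/2}$ rather than first noting the symmetric form $\sqrt{pq}$, but the algebra, the second-order expansion, and the cancellation of the linear terms in the denominator are identical.
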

\begin{proof}
In the light of the condition $\mathbb{P}_{g_{\theta}}= \{ p+\varepsilon p^{\gamma}, 1-p-\varepsilon p^{\gamma} \} = \{q,1-q\}$ ($q<\frac{1}{2}$), it is known that $\varepsilon$ and $\gamma$ represent the deviation between the two distributions ${\mathbb{P}}$ and $\mathbb{P}_{g_{\theta}}$.
Considering the optimal discriminator in the MIM-based GAN and the Eq. (\ref{eq.L_Doptimal}), it is readily seen that
\begin{equation}
\begin{aligned}
    & L_{\text{\rm MIM}}(D=D^{*}_{\text{\rm MIM}},G)\\
    & = \sqrt{{\rm e}} \bigg\{
     \mathbb{E}_{{\bm x}\sim \mathbb{P}}\bigg[\bigg(\frac{P({\bm x})}{P_{g_{\theta}}({\bm x})}\bigg)^{-\frac{1}{2}}\bigg]
    +\mathbb{E}_{{\bm x}\sim \mathbb{P}_{g_{\theta}}}\bigg[\bigg(\frac{P_{g_{\theta}}({\bm x})}{P({\bm x})}\bigg)^{-\frac{1}{2}}\bigg]
    \bigg\}\\
    & = 2\sqrt{{\rm e}} \bigg\{
    p (1+\varepsilon p^{\gamma-1})^{\frac{1}{2}}+ (1-p)\bigg(1-\frac{\varepsilon p^{\gamma}}{1-p}\bigg)^{\frac{1}{2}}
    \bigg\}\text{.}
\end{aligned}
\end{equation}
Focusing on the rare events reflected into the probability element $p$ rather than $(1-p)$ $(0<p<<\frac{1}{2})$, we have the proportion of rare events in the
$L_{\text{MIM}}(D=D^{*}_{\text{MIM}},G)$ as follows
\begin{equation}\label{eq.rare_MIM}
\begin{aligned}
    \Upsilon_{\text{MIM}}
    & = \frac{
    2\sqrt{{\rm e}} \bigg\{ p (1+\varepsilon p^{\gamma-1})^{\frac{1}{2}} \bigg\}
    }{L_{\text{MIM}}(D=D^{*}_{\text{MIM}},G)}\\
    & \overset{(a)}{=} \frac{
    p+\frac{1}{2} \varepsilon p^{\gamma}-\frac{1}{8}\varepsilon^2p^{2\gamma-1}
    + o(\varepsilon^2)
    }
    { \{ p+\frac{1}{2} \varepsilon p^{\gamma} - \frac{1}{8}\varepsilon^2p^{2\gamma-1}
    + o(\varepsilon^2) \}
    + \{ (1-p)-\frac{1}{2} \varepsilon p^{\gamma}-\frac{1}{8}\varepsilon^2 \frac{p^{2\gamma}}{1-p} + o(\varepsilon^2) \}
    }\\
    & =  \frac{\frac{p+q}{2}-\frac{1}{8}\varepsilon^2p^{2\gamma-1} + o(\varepsilon^2) }
    { 1-\frac{1}{8}\varepsilon^2 \frac{p^{2\gamma-1}}{1-p} + o(\varepsilon^2)}\\
    & \approx \frac{\frac{p+q}{2}-\frac{1}{8}\varepsilon^2p^{2\gamma-1}}
    { 1-\frac{1}{8} \varepsilon^2 \frac{ p^{2\gamma-1}}{1-p}}\text{,}
\end{aligned}
\end{equation}
where the equality $(a)$ is derived from Taylor's theorem.
\end{proof}

\begin{cor}\label{cor.proportion_rare_events}
Let $\mathbb{P}$ and  $\mathbb{P}_{g_{\theta}}$ be the real distribution and the generative one,
where $\mathbb{P}= \{p,1-p\}$ ($0<p<<\frac{1}{2}$) and $\mathbb{P}_{g_{\theta}}= \{ p+\varepsilon p^{\gamma}, 1-p-\varepsilon p^{\gamma} \} = \{q,1-q\}$ ($q<\frac{1}{2}$).
Consider the case that the optimal discriminator is achieved, which implies $D^{*}_{\text{\rm MIM}}({\bm x})=\frac{1}{2}+\frac{1}{2}\ln\frac{P({\bm x})}{P_{g_{\theta}}({\bm x})}$ for the MIM-based GAN
and $D^{*}_{\text{\rm KL}}({\bm x})=\frac{P({\bm x})}{P({\bm x})+P_{g_{\theta}}({\bm x})}$ for the original GAN.
In this regard, compared with the original GAN, the MIM-based GAN usually maintains higher proportion of rare events in the objective function of generator, namely $\Upsilon_{\text{MIM}}\ge \Upsilon_{\text{KL}}$, where the equality is hold in the case $p=q$.
\end{cor}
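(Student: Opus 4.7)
The plan is to mirror the derivation in Proposition 3: the MIM side is already handled in Eq.~(\ref{eq.rare_MIM}), so the only new work is to produce an analogous small-$\varepsilon$ expansion for $\Upsilon_{\text{KL}}$ and then compare the two. First, substituting the optimal KL discriminator $D^{*}_{\text{KL}}({\bm x}) = P({\bm x})/(P({\bm x})+P_{g_{\theta}}({\bm x}))$ into the original GAN loss $L(D,G) = \mathbb{E}_{\mathbb{P}}[\log D({\bm x})] + \mathbb{E}_{\mathbb{P}_{g_{\theta}}}[\log(1-D({\bm x}))]$ and specializing to the binary atoms $\{p,1-p\}$ vs.\ $\{q,1-q\}$, the loss collapses to
\begin{equation*}
L_{\text{KL}}(D^{*}_{\text{KL}},G) = p\log\tfrac{p}{p+q} + q\log\tfrac{q}{p+q} + (1-p)\log\tfrac{1-p}{2-p-q} + (1-q)\log\tfrac{1-q}{2-p-q},
\end{equation*}
and the rare-event proportion reads $\Upsilon_{\text{KL}} = \bigl(p\log\tfrac{p}{p+q}+q\log\tfrac{q}{p+q}\bigr)/L_{\text{KL}}(D^{*}_{\text{KL}},G)$.

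Next, I would substitute $q = p+\varepsilon p^{\gamma}$ and Taylor-expand each $\log(1+\cdot)$ factor through order $\varepsilon^{2}$ via $\log(1+u) = u - u^{2}/2 + o(u^{2})$, in parallel with the expansion $\sqrt{1+u} = 1 + u/2 - u^{2}/8 + o(u^{2})$ used in the proof of Proposition 3. The constant $-\log 2$ factors appearing in both the numerator and the denominator cancel once the ratio is taken, the first-order-in-$\varepsilon$ pieces recombine into $(p+q)/2$ in the numerator (and vanish in the denominator), and one obtains a closed-form fraction for $\Upsilon_{\text{KL}}$ structurally identical to Eq.~(\ref{eq.rare_MIM}), except that the $\varepsilon^{2}$ coefficients $1/8$ and $1/(8(1-p))$ there are replaced by $1/(8\log 2)$ and $1/(8(1-p)\log 2)$, respectively.

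Finally, I would form $\Upsilon_{\text{MIM}} - \Upsilon_{\text{KL}}$ and track the leading nontrivial contribution. Since the two expansions agree at orders $\varepsilon^{0}$ and $\varepsilon^{1}$, the sign of the difference is controlled entirely by the $\varepsilon^{2}$ coefficients; and since the MIM coefficients equal the KL ones multiplied by $\log 2 < 1$, the MIM $\varepsilon^{2}$ correction is strictly less negative than the KL one for every $p \in (0,1/2)$ and admissible $\gamma$, yielding $\Upsilon_{\text{MIM}} > \Upsilon_{\text{KL}}$ whenever $\varepsilon \neq 0$. The equality case $p = q$ corresponds to $\varepsilon = 0$, at which both fractions collapse to $p$, closing the ``if and only if''. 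The main obstacle is verifying the sign of the $\varepsilon^{2}$ residual carefully across the numerator and denominator contributions simultaneously; an alternative and perhaps cleaner route would be to establish the pointwise inequality $2(\sqrt{1+u}-1) \ge \log(1+u)$ for all $u > -1$ (a one-line convexity argument, with equality only at $u=0$) and to lift it through the expectations defining the two objective functions, which would capture in a single step the qualitative advantage motivating MIM-based GAN: the square-root form dampens the ratio $P/P_{g_{\theta}}$ less than the logarithm does, preserving a larger share of weight on the rare atom.
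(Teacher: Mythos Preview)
Your proposal is correct and follows essentially the same route as the paper: Taylor-expand $\Upsilon_{\text{KL}}$ to second order in $\varepsilon$, obtain the same fraction as Eq.~(\ref{eq.rare_MIM}) with $1/8$ replaced by $1/(8\ln 2)$, and compare. The paper resolves the ``main obstacle'' you flag (the competing numerator and denominator $\varepsilon^{2}$ terms) in one line by observing that $p,q<\tfrac{1}{2}$ forces $\tfrac{p+q}{2(1-p)}<1$, which is exactly the condition ensuring the numerator effect dominates; your alternative pointwise inequality is not used in the paper but would also work.
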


\begin{proof}
Similar to Proposition \ref{prop.proportion_rare_events_MIM}, it is readily seen that
\begin{equation}
\begin{aligned}
    & L_{\text{KL}}(D=D^{*}_{\text{KL}},G)\\
    & = \mathbb{E}_{{\bm x}\sim \mathbb{P}}
    \bigg[\ln \frac{{P}({\bm x})}{{P}({\bm x})+ P_{g_{\theta}}({\bm x})}\bigg]
    +\mathbb{E}_{{\bm x}\sim \mathbb{P}_{g_{\theta}}}
    \bigg[\ln \frac{ P_{g_{\theta}}({\bm x})}{{P}({\bm x})+ P_{g_{\theta}}({\bm x})}\bigg]\\
    & = -p\ln (2+\varepsilon p^{\gamma-1}) - (1-p)\ln (2-\varepsilon \frac{p^{\gamma}}{1-p})\\
    &  \quad + p(1+\varepsilon p^{\gamma-1})\ln (1-\frac{1}{2+\varepsilon p^{\gamma-1}})
    + (1-p-\varepsilon p^{\gamma}) \ln (1-\frac{1}{2-\varepsilon \frac{p^{\gamma}}{1-p}})\text{,}
\end{aligned}
\end{equation}
in which the the proportion of rare events is given by
\begin{equation}\label{eq.rare_KL}
\begin{aligned}
    \Upsilon_{\text{KL}}
    &= \frac{
    -p\ln (2+\varepsilon p^{\gamma-1})
    +p(1+\varepsilon p^{\gamma-1})\ln (1-\frac{1}{2+\varepsilon p^{\gamma-1}})
    }
    {L_{\text{KL}}(D=D^{*}_{\text{KL}},G)}\\
    & 
    = \frac{\frac{p+q}{2}-\frac{1}{8\ln2}\varepsilon^2p^{2\gamma-1} + o(\varepsilon^2) }
    { 1-\frac{1}{8\ln 2}\varepsilon^2 \frac{p^{2\gamma-1}}{1-p} + o(\varepsilon^2)}\\
    &  \approx
    \frac{\frac{p+q}{2}-\frac{1}{8\ln2}\varepsilon^2p^{2\gamma-1}  }
    { 1-\frac{1}{8\ln 2}\varepsilon^2 \frac{p^{2\gamma-1}}{1-p} }\text{.}
\end{aligned}
\end{equation}
According to $p<\frac{1}{2}$ and $q<\frac{1}{2}$, we have
$ \frac{p+q}{2(1-p)}<1$. Then, it is not difficult to see that the term of right-hand side in Eq. (\ref{eq.rare_MIM}) is larger than that in Eq. (\ref{eq.rare_KL}), which verifies the proposition.
\end{proof}

\begin{rem}
Consider that the objective function guides the generator networks of GANs to generate fraudulent fake data.
Since the dominant component of objective function depends on larger probability events, a generator prefers to output the data belonging to the normal data set.
In order to reveal the ability of rare events generation for generators, it is significant to compare the rare events proportion in different objective functions, which is discussed in Proposition \ref{prop.proportion_rare_events_MIM} and Corollary \ref{cor.proportion_rare_events}.
Furthermore, according to Proposition \ref{prop.stability} and Corollary \ref{cor.stability},
it is implied that when there exists some disturbance for rare events, the MIM-based GAN still keeps more stable than the original GAN, which means the former has more anti-interference ability to generate rare events.
To sum up, it is reasonable that the MIM-based GAN performs more efficient than the original GAN on rare events generation.
\end{rem}

\subsection{GAN-based anomaly detection}
As a promising application of GANs, anomaly detection has attracted much attention of researchers.
According to the principle of GANs, it is known that by resorting to the Nash equilibrium of objective function rather than a single optimization, the generator gains more representative power and specificity to represent the real data.
Actually, to identify anomalies (belonging to rare events) with GANs, the outputs of generator network are regarded to approximate normal events.
Then, by use of identification tools such as Euclidean distance, anomalies are detected due to their evident differences from the generative events (corresponding to normal ones).
Simultaneously, a trained discriminator network is also used to dig out the anomalies not in the generative data.

Due to the similar principle between the original GAN and MIM-based GAN,
the anomaly detection method is not only suitable for the former but also for the latter.
In this regard, we will introduce how to build a data processing model based on GANs (such as the original GAN and the MIM-based GAN), and how to use it to identify anomalous events (hardly ever appearing in the training data) in details.

\subsubsection{Procedure for anomaly detection with GANs}\label{section.procedure_detection_GAN}\ \par
We shall introduce a procedure of GAN-based detection, which provides a general framework for GANs to detect anomalies (where we take the MIM-based GAN as an example).
As for our goal, it is to allocate labels $\{0,1\}$ (``$0$'' for normal events and ``$1$'' for anomalous ones) to testing samples. The procedure is described as follows.
\begin{itemize}
\item{\textbf{Step 1: data processing preparation with GANs}}


At first, we generate some fake data similar to the real data by use of GAN.
By feeding training data ${\bm x}$ and the random data ${\bm z}$ in the latent space to the MIM-based GAN model, we train the generator and discriminator (both based on neural networks) with the two-player maxmin game given by Eq. (\ref{eq.opt_LMIM}) to generate the imitative data.

\item{\textbf{Step 2: anomaly score computing}}

In this step, a detection measurement tool named anomaly score is designed based on the generative data and the corresponding GANs.
We trained the discriminator $D$ and generator $G$ for enough iterations and then make use of them to identify the anomalous events by means of the anomaly score which is introduced in Section $\ref{section.score}$.

\item{\textbf{Step 3: decision for detection}}

By using anomaly scores of the testing samples (obtained in the step 2) to make a decision for detection, we label each sample in the testing dataset as
\begin{equation}\label{eq.decision}
A^{\text{test}}=\left\{
\begin{aligned}
1, \quad \text{for} \quad S^{\text{test }}> \Gamma\text{,}\\
0, \quad \text{otherwise}\text{,} \qquad
\end{aligned}
\right.
\end{equation}
where $A^{\text{test}}$ is a label for a testing sample, whose non-zero value indicates a detected rare event, i.e. the anomaly score is higher than a predefined threshold $\Gamma$.
\end{itemize}

\subsubsection{Anomaly score based on both discriminator and generator}\label{section.score}
\ \par

The superiority of architecture of GANs is that we jointly train two neural networks, namely the discriminator and the generator, which makes one more decision tool available.
We would like to exploit both discriminator and generator as tools to dig out anomalies.
In fact, there are two parts in the GAN-based anomaly detection as follows.

\begin{itemize}

\item \textbf{Generator-based anomaly detection}

In terms of the trained generator $G$ which generates realistic samples, it is regarded as a mapping from a latent data space to the real data space, namely $G : \mathcal{Z} \to \mathcal{X}$.
Since that it is more likely to learn the normal data which occurs frequently,
the generator tends to reflect the principal components of the real data' distribution.
In this regard, the generator is also considered as an inexplicit model to reflect the normal events.
Considering the smooth transitions in the latent space, we have the similar outputs of  generator when the inputs are close enough in the latent space.
Furthermore, if we can find the latent data ${\bm z}$ which is the most likely mapped into the testing data ${\bm x}$, the similarity between testing data ${\bm x}$ and reconstructed testing data $G({\bm z})$ reveals how much extent ${\bm x}$ can be viewed as a sample drawn from the distribution reflected by $G$.
As a result, it is applicable to use the residuals between ${\bm x}$ and $G({\bm z})$ to identify the anomalous events hidden in testing data.
In addition, we should also consider the discriminator loss as a regularization for the residual loss, which ensures the reconstructed data $G({\bm z})$ to lie on the manifold $\mathcal{X}$.

\item \textbf{Discriminator-based anomaly detection}

The trained discriminator $D$ which distinguishes generative data from real data with high sensitivity, is a direct tool to detect the anomalies.

\end{itemize}

Considering GAN-based anomaly detection, it is the most important to
find the optimal ${\bm z}$ in the latent space, which is mapped to the testing samples approximately.
In this regard, a random ${\bm z}$ from the latent space is chosen and put into the generator to produce the reconstructed sample $G({\bm z})$ which corresponds to the sample ${\bm x}$.
Then, we update ${\bm z}$ in the latent space by means of gradient descent with respect to the loss function given by Eq. (\ref{eq.Lerror}).
After sufficient iteration (namely the loss function hardly ever decreasing), we gain the most likely latent data ${\bm z}$ mapped into the testing data, which means the optimal ${\bm z}_{\text{opt}}$ is obtained by
\begin{equation}
{\bm z}_{\text{opt}} = \arg \min_{{\bm z}} {J_{\text{error}}( {\bm x} , {\bm z} )}\text{,}
\end{equation}
where the loss function $J_{\text{error}}( {\bm x} , {\bm z})$ is given by
\begin{equation}\label{eq.Lerror}
{J_{\text{error}}({\bm x} , {\bm z} )} = (1-\lambda)|| {\bm x} - G({\bm z})||_{p} + \lambda H_{\text{ce}}( D(G({\bm z})), \beta)\text{,}
\end{equation}
in which $\lambda$ is an adjustable weight $(0<\lambda<1)$, $|| \cdot||_{p}$ is the $p-$norm $(\text{usually } p=2)$, and $H_{\text{ce}}( \cdot, \cdot)$ denotes the sigmoid cross entropy which is given by
\begin{equation}
\begin{aligned}
 H_{\text{ce}}( D(G({\bm z})), \beta)
 & = -\beta \ln \Big[\frac{1}{1+\exp(-D(G({\bm z})))}\Big]\\
 & \quad - (1-\beta) \ln \Big[1- \frac{1}{1+\exp(-D(G({\bm z})))}\Big]\text{,}
\end{aligned}
\end{equation}
with the target $\beta=1$.

Furthermore, by combining the ${J_{\text{error}}({\bm x} , {\bm z} )} $ and $D({\bm x})$,  we have the anomaly detection loss, referred to as {\textit{anomaly score}}, which is given by
\begin{equation}\label{eq.anomaly_score}
    S^{\text{test}} = ( 1- \eta )J_{\text{error}}({\bm x}, {\bm z}_{\text{opt}}) + \eta H_{\text{ce}}( D({\bm x}), \beta)\text{,}
\end{equation}
where the adjustable weight satisfies $0<\eta<1$, as well as, $\beta=1$.

In view of the above descriptions, the outputs of trained discriminator and generator are exploited to calculate a set of anomaly scores for test data. Then, we detect the anomalies by use of the decision-making tool described as Eq. (\ref{eq.decision}).


\subsubsection{Analysis for the anomaly detection with GANs}\ \par
Here, we shall discuss the intrinsic principle of the above anomaly detection method.
Specifically, we take the detection method with the MIM-based GAN as an example to give some analyses.
Considering that the anomaly score (as a main part in the detection method) consists of two parts related to GANs, we will analyze the corresponding two parts, respectively, as follows.

\begin{itemize}
\item \textbf{Analysis for generator-based detection}:
As for the generator $G$ which maps the latent samples into realistic samples, it tends to generate the data with large probability in the real data set.
Assume the real data are classified into two sets according to their probability, namely the large probability events set $\Omega_{\text{large}}$
and the small probability events (or rare events) set $\Omega_{\text{rare}}$.
In this case, we have the proportion of large probability events in the objective function mentioned in Eq. (\ref{eq.LMIM}) as follows
\begin{equation}
\begin{aligned}
    R_{\Omega_{\text{large}}} =
    & \frac{\int_{\Omega_{\text{large}}}[ P({\bm x})\exp(1-D({\bm x})) +  P_{g_{\theta}}({\bm x})\exp(D({\bm x}))] {\rm d}{\bm x}}
    {\int_{\Omega_{\text{large}}+\Omega_{\text{rare}}}[ P({\bm x})\exp(1-D({\bm x})) +  P_{g_{\theta}}({\bm x})\exp(D({\bm x}))] {\rm d}{\bm x}}\\
    & \overset{(b)}{=}
    \frac{
     \int_{\Omega_{\text{large}}}
     \bigg[P({\bm x})\bigg(\frac{P({\bm x})}{P_{g_{\theta}}({\bm x})}\bigg)^{-\frac{1}{2}}
    + P_{g_{\theta}}({\bm x})
    \bigg(\frac{P_{g_{\theta}}({\bm x})}{P({\bm x})}\bigg)^{-\frac{1}{2}}\bigg]
    {\rm d}{\bm x}
    }
    {
     \int_{\Omega_{\text{large}}+\Omega_{\text{rare}}}
     \bigg[P({\bm x})\bigg(\frac{P({\bm x})}{P_{g_{\theta}}({\bm x})}\bigg)^{-\frac{1}{2}}
    + P_{g_{\theta}}({\bm x})
    \bigg(\frac{P_{g_{\theta}}({\bm x})}{P({\bm x})}\bigg)^{-\frac{1}{2}}\bigg]
    {\rm d}{\bm x}
    }\\
    & =
    \frac{
     \int_{\Omega_{\text{large}}}
     [{P({\bm x})}{P_{g_{\theta}}({\bm x})}]^{\frac{1}{2}} {\rm d}{\bm x}
    }
    {
     \int_{\Omega_{\text{large}}+\Omega_{\text{rare}}}
     [{P({\bm x})}{P_{g_{\theta}}({\bm x})}]^{\frac{1}{2}} {\rm d}{\bm x}
    }\text{,}
\end{aligned}
\end{equation}
where the equality $(b)$ is obtained by replacing the discriminator $D$ with $D_{\text{MIM}}^*$ (in Eq. (\ref{eq.D*})).
When the generative probability $P_{g_{\theta}}({\bm x})$ is close to the real probability $P({\bm x})$,
we have $P_{g_{\theta}}({\bm x}) \approx P({\bm x}) >>0 $ in the region $\{{\bm x} \in \Omega_{\text{large}}\}$,
while in the region $\{{\bm x} \in \Omega_{\text{rare}}\}$, $P_{g_{\theta}}({\bm x})$ is pretty small or even approximates to zero.
Then, the large probability events proportion in the objective function is close to the corresponding probability, that is, $R_{\Omega_{\text{large}}}\approx
\frac{\int_{\Omega_{\text{large}}}{P({\bm x})} {\rm d}{\bm x}}
{\int_{\Omega_{\text{large}}+\Omega_{\text{rare}}}{P({\bm x})} {\rm d}{\bm x}} = \int_{\Omega_{\text{large}}}{P({\bm x})} {\rm d}{\bm x} \to 1$.
This implies that the generative data $G({\bm z})$ is more likely to belong to the large probability events set (that is usually the principle component of normal events), regarded as the whole of normal events.
It is readily seen that large probability events make more effects on training the generator than rare events with small probability (which usually consists of the small part of normal events and anomalous ones).
Furthermore, in the loss function $J_{\text{error}}( {\bm x} , {\bm z})$,
$|| {\bm x} - G({\bm z})||_{p}$ is minimized to let ${\bm x}$ be close to a generative event (regarded as a normal event),
while $H_{\text{ce}}( D(G({\bm z})), \beta)$ enforces $G({\bm z})$ to be in the real data space.
As a result, large enough $J_{\text{error}}( {\bm x} , {\bm z})$ usually reflect the anomalous events, which plays an important role in the anomaly score.

\item \textbf{Analysis for discriminator-based detection}:
As the second term of anomaly score, $H_{\text{ce}}( D({\bm x}), \beta)$ is based on a well trained discriminator (similar to the optimal one).
In terms of the ideal discriminator as Eq. (\ref{eq.D*}), it can make a decision whether a sample belongs to the training data set (namely the real data set) or not.
In particular, when a testing data does not appear in the training data set, the corresponding value of $D({\bm x})$ approximates to zero (which implies a large cross entropy).
While, the values of $D({\bm x})$ for other testing data (included in the training data set) are more likely close to $\frac{1}{2}$.
In this regard, the discriminator is exploited for anomaly detection.
\end{itemize}

\begin{rem}
It is necessary to give comparisons for the original GAN and MIM-based GAN to detect anomalies.
Due to the similar two-player game principle, the original GAN and MIM-based GAN have analogous characteristics in the anomaly detection.
However, from the discussion of Corollary \ref{cor.proportion_rare_events}, we know that the
MIM-based GAN pays more attention to the small part of normal events (which are with smaller probability) in some degree than the original GAN.
In other words, the smaller probability events in the normal events set are more likely generated rather than lost, which indicates that the regarded anomalous events set is more close to the real anomalous events set and contains less normal events with small probability.
This has a positive impact on the anomaly detection.
\end{rem}

\section{Experiments}

Now, we present experimental results to show the efficiency of MIM-based GAN and other classical GANs.
Particularly, we compare our method with other adversarial networks (such as the original GAN, LSGAN and WGAN) with respect to data generation and anomaly detection.
Our main findings for the MIM-based GAN may be that:

\begin{itemize}
\item  Compared with some classical GANs, training performance improvements are available during the training process to the equilibrium;

\item By use of MIM-based GAN, there exists the better performance on detecting anomalies than other classical GANs.
\end{itemize}

\subsection{Datasets}
As for the datasets, artificial data and real data are considered to compare different approaches and evaluate the their performance.
Particularly, on one hand, we take artificial Gaussian distribution data as an example, whose mean and standard deviation are denoted by $\mu$ and $\sigma$ respectively.
The Gaussian distribution $\mathcal{N}$($\mu$, $\sigma$) is chosen as $\mathcal{N}$($\mu=4$, $\sigma=1.25$) in our experiments. 
On the other hand, several real datasets (including the MNIST databset and Outlier Detection DataSet (ODDS)) are collected online, whose details are listed as follows.

\begin{itemize}
\item \textbf{MNIST}:
As for this dataset, $10$ different classes of digits $\{0,1,2,...,9\}$ in MNIST are generated by use of GANs.
In order to apply this case into anomaly detection, we choose one kind of digit class (such as ``$0$'') as the rare events (namely anomalies), while the rest parts are treated as normal ones.
In other words, there exist $10\%$ rare events mixed in the whole dataset.
The training set consists of $60,000$ image samples ($28 \times 28$ pixel gray handwritten digital images), while there are $10,000$ image samples in the testing set.

\item \textbf{ODDS}:
Considering that it is necessary to process real-world datasets in practice, we shall investigate several anomaly detection datasets from the ODDS repository as follows.

\textit{a) Cardiotocography:}
The Cardiotocography dataset in ODDS repository has $21$ features in each sample, such as Fetal Heart Rate (FHR) and Uterine Contraction (UC) features.
There are $1,831$ samples in this dataset, including $176$ samples ($9.6\%$ data) that belong to the pathologic class, namely the anomalous events class.

\textit{b) Thyroid:}
The Thyroid dataset contains $6$ real attributes (namely data dimension).
In this database, there are $3,772$ samples, containing $93$ hyperfunction samples ($2.5\%$ data) which belong to the anomalous events class.

\textit{c) Musk:}
The Musk dataset in ODDS repository consists of $3,062$ samples including $3.2\%$ anomalies.
Specifically, the dataset contains several-musks and non-musk classes, which are regarded as the inliers and outliers (or anomalies) respectively. By the way, there exist $166$ features in each sample.

\end{itemize}

\subsection{Experiment details}

\subsubsection{Training performance experiments}\ \par
To intuitively give some comparisons on the training performances of GANs, we use the artificial data following Gaussian distribution $\mathcal{N}$($\mu=4$, $\sigma=1.25$) to train the MIM-based GAN, original GAN, LSGAN and WGAN.

In particular, we first train the adversarial networks for a number of iterations (e.g. $500$, $1,000$ and $1,500$ training iterations) to obtain a not bad discriminator.
During each iteration, there are $16,000$ samples produced by programming as initial input data for the discriminator.
Then, with the fixed discriminator, a generator is trained by reducing the error of objective function in each kind of adversarial networks.
Furthermore, we adopt two Deep Neural Networks (DNNs) as the generator and discriminator respectively, in which the Stochastic Gradient Descent (SGD) optimizer with $0.001$ learning rate is chosen and the activation functions of discriminator and generator are sigmoid and tanh function respectively.
Finally, we draw the error curves of the objection functions of generators to show the different performance during the training process.

%

\subsubsection{Anomaly detection experiment based on MNIST and ODDS}\ \par
When detecting anomalies in the MNIST and ODDS, we adopt the general framework of GANs described in Section \ref{section.procedure_detection_GAN}.
However, to compare the efficiency of the GANs, including the original GAN, MIM-based GAN, LSGAN and WGAN, we change the main component (namely different objective functions for the data generation) of the framework.
The Receiver Operating Characteristic (ROC) curve, Area Under Curve (AUC) and $F_{1}$-score are used as criterions to compare different GANs for anomaly detection in the above datasets.

In details, the detection procedure on the MNIST and ODDS is similar to that mentioned in Section \ref{section.procedure_detection_GAN}, where the neural networks training of GANs is the key point.
At first, as for the framework of neural networks, DNNs are used, whose activation functions in the output layers are sigmoid function and tanh function for the discriminator and generator respectively and those in the other layers are all leaky ReLU.
Moreover, the Adam algorithm is chosen to optimize the weights for the networks with $0.001$ learning rate.
According to the properties of MNIST and ODDS, we configure the hidden layer size $256$ for the MNIST and Musk dataset (belonging to the ODDS) as well as $64$ for the Cardiotocography and Thyroid in the ODDS.
Furthermore, when we obtain trained GANs and the corresponding generative data, we use
Eq. (\ref{eq.Lerror}) and Eq. (\ref{eq.anomaly_score}) (where $\lambda=0.1$ and $\eta=0.05$) to gain anomaly scores for testing data, in which the optimal ${\bm z}_{\text{opt}}$ in latent space is obtained by used of Adam optimizer with learning rate $0.003$.
At last, we adopt Eq. (\ref{eq.decision}) to label the testing data so that the outliers (namely anomalies) are detected.

\begin{figure}[hbpt]
\centering
\includegraphics[width=6.3in]{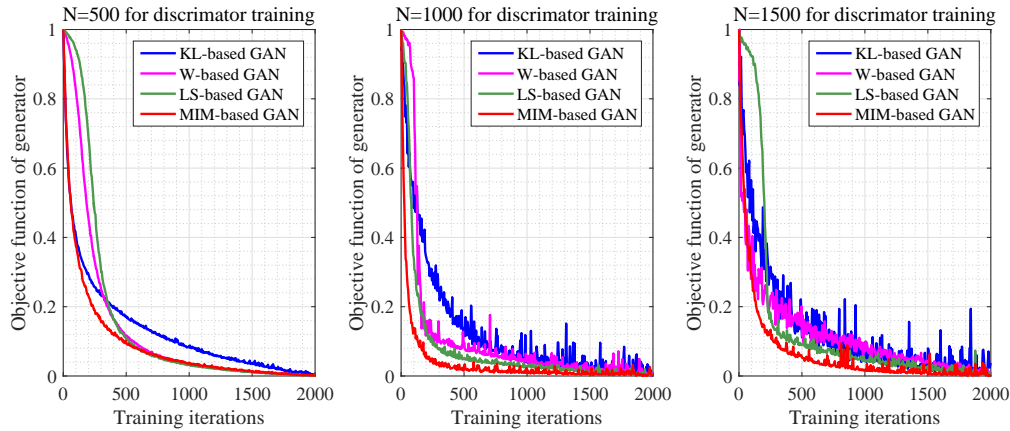}
\caption{The MIM-based GAN, KL-based GAN (namely original GAN), LS-based GAN (namely LSGAN) and W-based GAN (namely WGAN) are used to generate the data following Gaussian distribution
$\mathcal{N}$($\mu=4$, $\sigma=1.25$) where $\mu$ and $\sigma$ denote the mean value and the standard deviation value respectively.
There are $16,000$ input samples following the Gaussian distribution in every training iteration.
The generators are trained with the discriminators fixed for $500$, $1,000$ or $1,500$ training iterations (namely ${\rm N}=500, 1,000 \text{ or } 1,500$).
In these cases, the curves for the objection functions of generators are drawn to compare the performances on training process.
}
\label{fig_gradient}
\end{figure}

\subsection{Results and discussion}
According to the above design for experiments, we do simulations with Pytorch and Matlab to evaluate the theoretical analysis. In particular, the experiment results are discussed as follows.

\subsubsection{Training performance of GANs}\ \par
From figure \ref{fig_gradient}, it is illustrated that during the training process,
the MIM-based GAN, LSGAN and WGAN all perform better than the original GAN in the aspects of convergence and stability of training process.
While, the MIM-based GAN also has its own superiority.
On one hand, the MIM-based GAN has better convergence than the other GANs with respect to the objective function of generator.
On the other hand, when a discriminator is given, the generator for MIM-based GAN has more stability than that for original GAN, which is also comparable to those for LSGAN and WGAN.
In brief, the MIM-based GAN converges to equilibrium faster and more stably than the original GAN. It also has these advantages on the training process to some degree, compared with LSGAN and WGAN.


\begin{figure}[hbpt]
\centering
\includegraphics[width=6.3in]{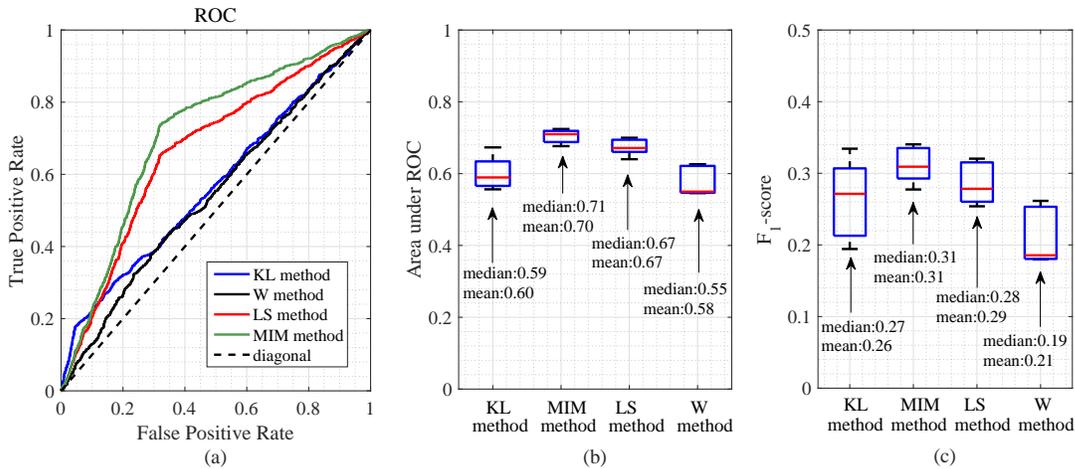}
\caption{ROC curve, AUC and $F_{1}$-score for GAN-based anomaly detection in MNIST dataset, where  MIM-based GAN (corresponding to MIM method), original GAN (corresponding to KL method), LSGAN (corresponding to LS method) and WGAN (corresponding to W method) are used to generate data.
}
\label{fig_mnist}
\end{figure}
\subsubsection{Results of anomaly detection for MNIST and ODDS}\ \par
Figure \ref{fig_mnist} shows the performance of anomaly detection with different GANs in the MNIST experiment, including ROC curve, AUC and $F_{1}$-score.
In general, it is not difficult to see that the MIM-based GAN improves these kinds of performance compared with the other GANs.
This is resulted from the fact that MIM-based objective function enlarges the proportion of rare events.
Moreover, we also see that the volatility of the detection results (shown by the AUC and $F_{1}$-score) in the KL-based GAN and W-based GAN methods is greater than that in two other methods.

Figure \ref{fig_cardio}, \ref{fig_thyroid} and \ref{fig_musk} compare several GAN-based detection methods in the case of ODDS by showing the performances in terms of ROC curve, AUC and $F_1$-score.
Although there exists different performance in the three datasets of ODDS, MIM-based GAN still performs better on the anomaly detection.
Particularly, on one hand, the ROC curve for MIM-based GAN method is more close to the ideal than those for the other detection methods with GANs.
On the other hand, as for the AUC and $F_1$-score of detection results, the corresponding statistics (including the median and mean) have better results when using MIM-based GAN method as shown in the box-plots ($b$) and ($c$) in Figure \ref{fig_cardio}, \ref{fig_thyroid} and \ref{fig_musk}.
Actually, these datasets from ODDS repository provide some convincing and realistic evidences for the advantage of MIM-based GAN on anomaly detection.
\begin{figure}[hbpt]
\centering
\includegraphics[width=6.3in]{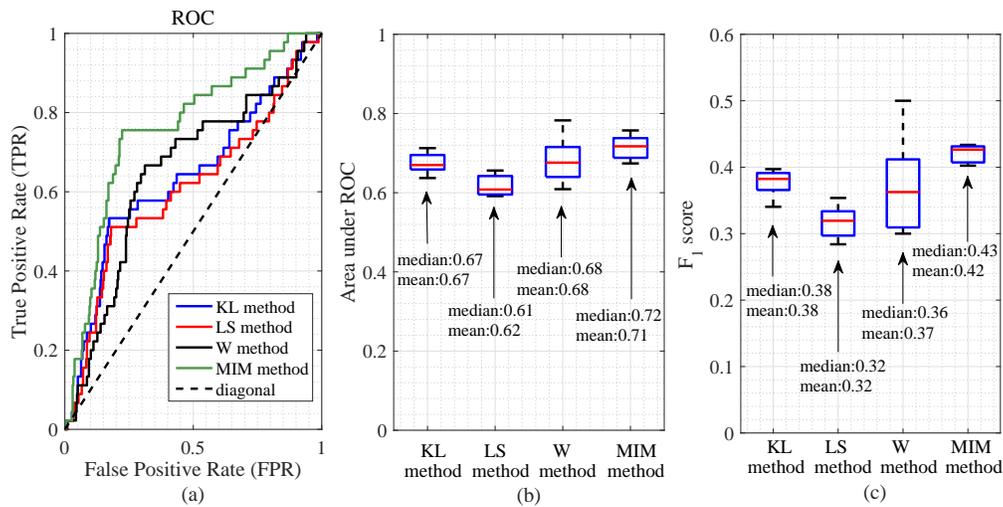}
\caption{ROC curve, AUC and $F_{1}$-score for GAN-based anomaly detection in Cardiotocography dataset.
}
\label{fig_cardio}
\end{figure}

\begin{figure}[hbpt]
\centering
\includegraphics[width=6.3in]{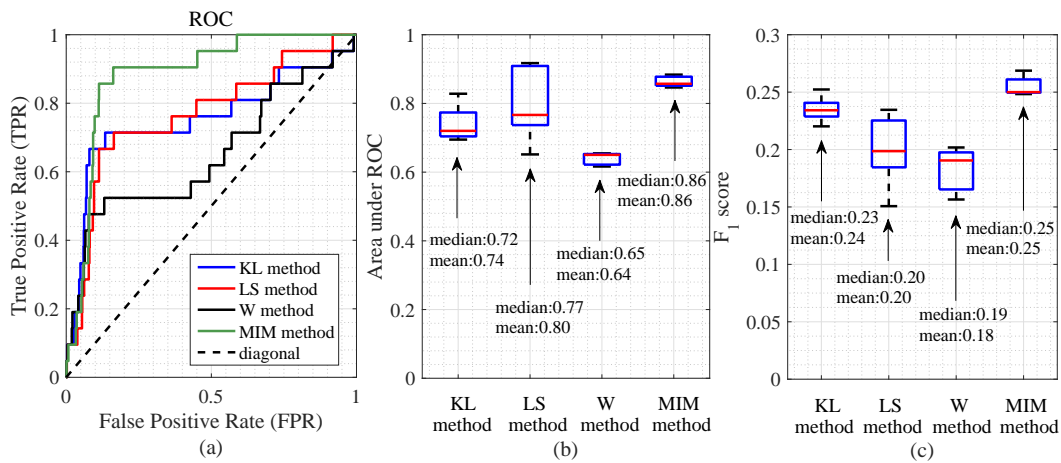}
\caption{ROC curve, AUC and $F_{1}$-score for GAN-based anomaly detection in Thyroid dataset.
}
\label{fig_thyroid}
\end{figure}

\begin{figure}[!t]
\centering
\includegraphics[width=6.3in]{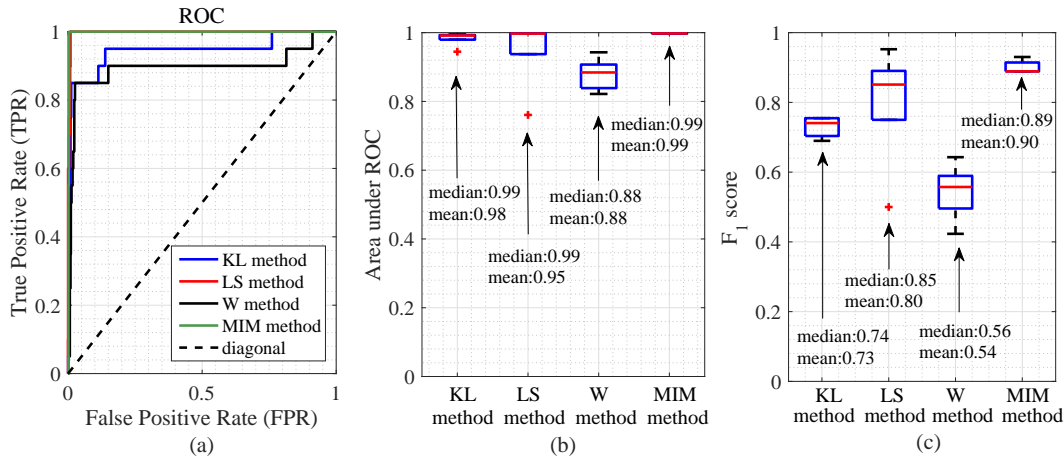}
\caption{ROC curve, AUC and $F_{1}$-score for GAN-based anomaly detection in Musk dataset.
}
\label{fig_musk}
\end{figure}

\section{Conclusion}
In this paper, we proposed a new model named MIM-based GAN to enrich the conventional GANs.
In terms of this model, it has different performance of training process with the other classical GANs.
Furthermore, another advantage of this new developed approach is to highlight the proportion of rare events in the objective function to generate more efficient data.
In addition, we showed that compared with the classical GANs, the MIM-based GAN has more superiority on the anomaly detection, which may be a promising application direction with GANs in practice.

\section*{Acknowledgment}
We thank a lot for the advices from Prof. Xiaodong Wang and Xiao-Yang Liu at Columbia Uiversity, USA. Their suggestions make a positive effect on this work.

\appendices






\section{Proof of Corollary \ref{cor.stability}}\label{app.cor_stability}

With respect to the MIM-based GAN, original GAN and its improved GAN, it is not difficult to see that the corresponding gradient functions of generators depend on
$\nabla_{\bm\theta}\mathbb{E}_{{\bm z}\sim \mathbb{P}_z} [\exp(D(g_{\bm\theta}({\bm z}))) ]$,
$\nabla_{\bm\theta}\mathbb{E}_{{\bm z}\sim \mathbb{P}_z} [\ln (1-D(g_{\bm\theta}({\bm z}))) ]$ and
$\nabla_{\bm\theta}\mathbb{E}_{{\bm z}\sim \mathbb{P}_z} [-\ln (D(g_{\bm\theta}({\bm z}))) ]$, respectively.

On one hand, in the case that $D-\tilde D^*=\epsilon$ ($\epsilon\in [0,1]$ and $\tilde D^*(g_{\bm\theta}({\bm z}))=0$), we have
\begin{equation}\label{eq.KL1_perfect_stability}
\begin{aligned}
     \nabla_{\bm\theta}\mathbb{E}_{{\bm z}\sim \mathbb{P}_z} [\ln (1-D(g_{\bm\theta}({\bm z}))) ]
    & = \mathbb{E}_{{\bm z}\sim \mathbb{P}_z}
    [- \frac{ \nabla_{{\bm x}}D({\bm x})\nabla_{\bm\theta}g_{\bm\theta}({\bm z})}
    {1-D(g_{\bm\theta}({\bm z}))} ]\\
    & = \mathbb{E}_{{\bm z}\sim \mathbb{P}_z}
    [- \frac{ \nabla_{{\bm x}}D({\bm x})\nabla_{\bm\theta}g_{\bm\theta}({\bm z})}
    {1-\tilde D^*(g_{\bm\theta}({\bm z})) -\epsilon} ]\\
    & = -\frac{1}{1 -\epsilon}
    \mathbb{E}_{{\bm z}\sim \mathbb{P}_z}[{ \nabla_{{\bm x}}D({\bm x})\nabla_{\bm\theta}g_{\bm\theta}({\bm z})}
     ]\text{,}
\end{aligned}
\end{equation}
\begin{equation}\label{eq.KL2_perfect_stability}
\begin{aligned}
     \nabla_{\bm\theta}\mathbb{E}_{{\bm z}\sim \mathbb{P}_z} [-\ln (D(g_{\bm\theta}({\bm z}))) ]
    & = \mathbb{E}_{{\bm z}\sim \mathbb{P}_z}
    [- \frac{\nabla_{{\bm x}}D({\bm x})\nabla_{\bm\theta}g_{\bm\theta}({\bm z}) }
    {D(g_{\bm\theta}({\bm z}))} ]\\
    & = \mathbb{E}_{{\bm z}\sim \mathbb{P}_z}
    [- \frac{\nabla_{{\bm x}}D({\bm x})\nabla_{\bm\theta}g_{\bm\theta}({\bm z}) }
    {\tilde D^*(g_{\bm\theta}({\bm z})) +\epsilon} ]\\
    & = -\frac{1}{\epsilon}
    \mathbb{E}_{{\bm z}\sim \mathbb{P}_z}[{ \nabla_{{\bm x}}D({\bm x})\nabla_{\bm\theta}g_{\bm\theta}({\bm z})}
     ]\text{.}
\end{aligned}
\end{equation}

It is not difficult to see that $\frac{1}{\epsilon}$ and $\frac{1}{1-\epsilon}$ are symmetric in the case $\epsilon\in [0,1]$, which implies that $\nabla_{\bm\theta}\mathbb{E}_{{\bm z}\sim \mathbb{P}_z} [-\ln (D(g_{\bm\theta}({\bm z}))) ]$ has the same anti-interference ability as $\nabla_{\bm\theta}\mathbb{E}_{{\bm z}\sim \mathbb{P}_z} [\ln (1-D(g_{\bm\theta}({\bm z}))) ]$.

As for the function $h(u)=\frac{1}{1-u}-\exp(u)$ ($u\in [0,1]$), it is not difficult to see that
$h(u)\ge 0$ for $u\in [0,1]$ where the equation holds at $u=0$. In fact, $\exp(-u)\geq 1-u$ for $u\in [0,1]$.
Therefore, by comparing Eq. (\ref{eq.MIM_perfect_stability}) with Eq. (\ref{eq.KL1_perfect_stability}) and Eq. (\ref{eq.KL2_perfect_stability}),
it is readily seen that the result of this corollary is true.

On the other hand, in the case that $D-\check D^*=\epsilon$ ($|\epsilon|<\frac{1}{2}$ and $\check D^*(g_{\bm\theta}({\bm z}))=\frac{1}{2}$), it is not difficult to see that
\begin{equation}\label{eq.KL1_worst_stability}
\begin{aligned}
     \nabla_{\bm\theta}\mathbb{E}_{{\bm z}\sim \mathbb{P}_z} [\ln (1-D(g_{\bm\theta}({\bm z}))) ]
    & = \mathbb{E}_{{\bm z}\sim \mathbb{P}_z}
    [- \frac{ \nabla_{\bm x}D({\bm x})\nabla_{\bm\theta}g_{\bm\theta}({\bm z})}
    {1-\check D^*(g_{\bm\theta}({\bm z})) -\epsilon} ]\\
    & = -\frac{1}{\frac{1}{2}-\epsilon}
    \mathbb{E}_{{\bm z}\sim \mathbb{P}_z}[{\nabla_{{\bm x}}D({\bm x})\nabla_{\bm\theta}g_{\bm\theta}({\bm z}) }
     ]\text{,}\\
\end{aligned}
\end{equation}

\begin{equation}\label{eq.KL2_worst_stability}
\begin{aligned}
     \nabla_{\bm\theta}\mathbb{E}_{{\bm z}\sim \mathbb{P}_z} [-\ln (D(g_{\bm\theta}({\bm z}))) ]
    & = \mathbb{E}_{{\bm z}\sim \mathbb{P}_z}
    [- \frac{\nabla_{\bm x}D({\bm x}) \nabla_{\bm\theta}g_{\bm\theta}({\bm z}) }
    {\check D^*(g_{\bm\theta}({\bm z})) +\epsilon} ]\\
    & = -\frac{1}{\frac{1}{2}+\epsilon}
    \mathbb{E}_{{\bm z}\sim \mathbb{P}_z}[{ \nabla_{\bm x}D({\bm x})\nabla_{\bm\theta}g_{\bm\theta}({\bm z})}
     ]\text{.}\\
\end{aligned}
\end{equation}

It is readily seen that $\frac{1}{\frac{1}{2}-\epsilon}$ and $\frac{1}{\frac{1}{2}+\epsilon}$ are symmetric if there exists a small disturbance, i.e. $|\epsilon|< C< \frac{1}{2}$ ($C$ is a constant), that is to say, the anti-interference ability for $\nabla_{\bm\theta}\mathbb{E}_{{\bm z}\sim \mathbb{P}_z} [-\ln (D(g_{\bm\theta}({\bm z}))) ]$  is equal to that for $\nabla_{\bm\theta}\mathbb{E}_{{\bm z}\sim \mathbb{P}_z} [\ln (1-D(g_{\bm\theta}({\bm z}))) ]$.

Considering the function $h(u)=\frac{1}{\frac{1}{2}-u}-\exp(\frac{1}{2}+u)$ where $u\in (-\frac{1}{2},\frac{1}{2})$, it is readily seen that $h(u)> 0$ for $u\in (-\frac{1}{2},\frac{1}{2})$.
Thus, by comparing Eq. (\ref{eq.MIM_worst_stability}) with Eq. (\ref{eq.KL1_worst_stability}) and Eq. (\ref{eq.KL2_worst_stability}),
it is easy to see that the result of this corollary also is true.

To sum up, the corollary is verified.

\end{document}